\theoremstyle{definition}
\newtheorem{definition}{Definition}
\newtheorem{proposition}{Proposition}
\newtheorem{corollary}{Corollary}
\newcommand{\Tau}{\mathrm{T}}
\newcommand{\indep}{\perp \!\!\! \perp}
\title{\LARGE \bf
Autonomous Task Planning for Heterogeneous Multi-Agent Systems*
}
\author{Anatoli A. Tziola and Savvas G. Loizou
\thanks{*This work was partially supported by European Union's Horizon 2020 research and innovation program under grant agreements 767642 (L4MS) and 951813 (Better Factory).}
\thanks{The authors are with the Department of Mechanical Engineering and Materials Science and Engineering,
		Cyprus University of Technology, Limassol, Cyprus,
        {\tt\small \{anatoli.tziola, savvas.loizou\}@cut.ac.cy}.}
}
\begin{document}

\maketitle

\thispagestyle{empty}
\pagestyle{empty}

\begin{abstract}
This paper presents a solution to the automatic task planning problem for multi-agent systems. A formal framework is developed based on the Nondeterministic Finite Automata with $\epsilon$-transitions, where given the capabilities, constraints and failure modes of the agents involved, an initial state of the system and a task specification, an optimal solution is generated that satisfies the system constraints and the task specification. The resulting  solution is guaranteed to be complete and optimal; moreover a heuristic solution that offers significant reduction of the computational requirements while relaxing the completeness and optimality requirements is proposed. The constructed system model is independent from the initial condition and the task specification, alleviating the need to repeat the costly pre-processing cycle for solving other scenarios, while allowing the incorporation of failure modes on-the-fly. Two case studies are provided: a simple one to showcase the concepts of the proposed methodology and a more elaborate one to demonstrate the effectiveness and validity of the  methodology.
\end{abstract}

\section{INTRODUCTION}
Multi-agent systems  operating autonomously in dynamical environments to perform complicated tasks have been one of the major areas of research interest during the last decade. High-level task planning using formal methods to  define the system requirements is one of the promising approaches. Typical objectives arise from the multi-agent systems' behavior and requirements, such as sequential or reactive tasks, control, coordination and motion and task planning.

Several proposed methodologies address the high-level task planning problem using formal languages to express autonomous systems behavior. Some of the most common approaches include Linear Temporal Logic (LTL), sampling-based approaches and domain definition languages. Many of the existing works use LTL formulas to develop bottom-up approaches \cite{kress2007s, kress2009temporal, guo2016task, smith2011optimal} where local LTL expressions are assigned to robots, or top-down approaches \cite{chen2013formal}, \cite{kloetzer2009automatic}, \cite{quottrup2004multi}, where a global task is decomposed into independent subtasks that are treated separately by each agent. 

In \cite{kress2007s}, a high-level plan is found by a discrete planner which seeks the set of system transitions that ensure the satisfaction of a logic formula. In \cite{dimagoras2014}, a formal method based on LTL has been developed to model specifications and implement centralized planning for multi-agent systems. Some related works suggest that hierarchical abstraction techniques for single-agent systems can be extended to multi-agent systems using parallel compositions  \cite{kloetzer2009automatic}, \cite{quottrup2004multi}. The fact that the transitions should be on the common events to allow parallel execution is very restrictive. In \cite{guo2016task}, temporal logic formulas are utilized to specify sub-formulas that could be executed in parallel by the agents,  without a global task definition. In \cite{schillinger2018simultaneous}, the authors tackle the multi-robot task allocation problem under constraints defined by LTL formalism to concurrently plan tasks for robot agents. In \cite{kloetzer2016multi}, a formal method based on LTL has been deployed to model multi robot  motion planning specifications. Common among the above works is the adaptation of formal verification techniques for motion planning and controller synthesis. The two main limitations of the above works are the exponential growth of the state space even for relatively simple problems and the extra computations required to express LTL formulas in B\text{\"u}chi automata. In \cite{goryca2013formal}, authors propose a formal synthesis of supervisory control software for multi-robot systems, while the scalability of the approach was improved in \cite{hill2017scaling}.

In \cite{kantaros2020stylus}, a sampling-based approach is presented using directed trees to approximate the state space and transitions of synchronous product automata. The sampling process is guided by transitions that belong to the shortest path to the accepting states. However, these algorithms provide no solution quality guarantee. On the other hand, implementing PDDL \cite{fikes1971strips, fox2003pddl2} would be time consuming for real time applications.

Our proposed approach seeks the optimal shortest path in a weighted graph and a case study focusing on the logistics flow aspects was chosen. 

The main contribution of this work are:
\begin{itemize}
\item System modeling using nondeterministic finite automata with $\epsilon$-transitions \cite{cassandras2009introduction} expressing the system's behavior and combining the agents capabilities and constraints at the individual and group level, including failure modes.
\item Determination of the optimal task plan that satisfies the task specification with-out the need to repeat the pre-processing cycle for solving other scenarios.
\item Determination of reduced complexity sub-optimal solutions to the task planing problem.
\item Incorporation of failure modes on-the-fly, after building  the global system model (i.e. without the need to repeat the costly pre-processing step) 
\end{itemize}

This work introduces a new approach the SuPErvisory Control Task plannER (\textit{SPECTER}) for high-level task planning problems with respect to agent capabilities, constraints and failure modes. The problem formulation uses the environment model composed by individual agents' capabilities and constraints, considering the individual agent's failure modes to determine the optimal task plan. This work builds on top of \cite{loizou2005automated}. The problem is posed as a special case of module composition problem (MCP)  \cite{tripakis2003automated} and then reduced to a combinatorial optimization problem of shortest dipath, which can be solved in polynomial time. The optimal module chain, combined with the formal approach of Supervisory Control Theory (SCT) \cite{ramadge1987supervisory} can then be applied to synthesize control laws and communication strategies for efficiently accomplishing a global tasks \cite{wonham1987supremal}. Agents'  navigation can be handled by a navigation methodology with performance guarantees that ensure compositionality, like the navigation transformation \cite{loizou2022mobile}. Agents' control and navigation are not under the scope of the current work. The active source code of the software developed is available at \cite{bpogithub}. 

The rest of the paper is organized as follows: Section \ref{sec:Prelims} presents the necessary preliminary notions, section \ref{sec:PROB_FORM} presents the problem formulation using the $\epsilon_0$-NFA formalism while section \ref{sec:MC_PF} exploits the $\epsilon_0$-NFA formalism to reduce the problem to a Module Composition one. Section \ref{sec:Analysis} presents the algorithms and the analysis of the methodology while section \ref{sec:Exp} presents a case study to demonstrate the operation of SPECTER. Section \ref{sec:Concl} concludes the paper. 

\section{PRELIMINARIES}
\label{sec:Prelims}
\subsection{Definitions}
In this section, we introduce the necessary formalisms. 

If $A$ and $B$ are sets, the cardinality of set $A$ is denoted as $| A |$. The union and intersection of sets are denoted as $A\cup B$ and $A\cap B$ whereas set subtraction of set $B$ from set $A$ is denoted as $A\setminus B$. We use the $\wedge$ operator to denote conjunction.

We will use the definition of  Deterministic Finite Automata (DFA) by \cite{cassandras2009introduction}.
\begin{definition}[DFA]
A Deterministic Finite Automaton (DFA) is a six-tuple, $G = (X_G, E_G, f_G, \Gamma_G, x_{0,G}, X_{m,G})$, consisting of:
\begin{itemize}
\item a non-empty finite set of states $X_G$,
\item a non-empty finite set of events $E_G$,
\item a transition function $f_G: X_G \times E_G \rightarrow X_G$,
\item an active event function $\Gamma_G: X_G \rightarrow 2^{E_G}$, 
\item an initial state $x_{0,G} \in X_G$,
\item a finite set of marked states $X_{m,G} \subseteq X_G$
\end{itemize}
where $f_G$ is partial on its domain in the sense that if event $e \in \Gamma(x)$, then $e$ labels a transition from $x$ to a unique state $y=f_G(x,e)$.
\label{def:DFA}
\end{definition}

\begin{definition}[$\epsilon_0$-NFA]
An $\epsilon_0$-NFA is a Nondeterministic Finite Automaton with $\epsilon$-transitions only from the initial state, defined as a six-tuple, \[ {}^{\epsilon_0} G= (X_G \cup \{x_0 \}, E_G \cup \{\epsilon\}, {}^{\epsilon_0} \!f_G, \Gamma_G, x_0, X_{m,G}),\] where ${}^{\epsilon_0} \!f_G: X_G \cup \{x_0 \} \times E_G \cup \{\epsilon\} \rightarrow 2^{X_G}$ such that $\forall (x,e) \in X_G \times E_G: f_G(x,e) = {}^{\epsilon_0} \!f_G(x,e)$, where the variables with $``G"$ subscript are as in Definition \ref{def:DFA} and $x_0 \notin X_G$ is the initial state.
\label{def:NFA}
\end{definition}

\begin{corollary}
\label{cor:NFA2DFA}
If $ {}^{\epsilon_0} G$ is an $\epsilon_0$-NFA then it can be converted to the DFA $G$ by removing $x_0$ along with the associated $\epsilon$ transitions and assigning an $x_{0,G} \in X_G$ as an initial state. 
\end{corollary}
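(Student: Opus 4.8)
The plan is to exhibit explicitly the six-tuple that is claimed to be a DFA and then verify, item by item, that it satisfies every clause of Definition \ref{def:DFA}. Concretely, starting from ${}^{\epsilon_0}G = (X_G\cup\{x_0\},\, E_G\cup\{\epsilon\},\, {}^{\epsilon_0}\!f_G,\, \Gamma_G,\, x_0,\, X_{m,G})$, I would form the candidate $G := (X_G,\, E_G,\, f_G,\, \Gamma_G,\, x_{0,G},\, X_{m,G})$, where $X_G$, $E_G$, $\Gamma_G$ and $X_{m,G}$ are taken over unchanged, $x_{0,G}$ is an arbitrarily chosen element of $X_G$ (legitimate precisely because $X_G$ is non-empty), and $f_G$ is the restriction of ${}^{\epsilon_0}\!f_G$ to the domain $X_G\times E_G$.

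First I would dispose of the structural bookkeeping: $X_G$ and $E_G$ are non-empty finite sets by the hypotheses carried over into Definition \ref{def:NFA} from Definition \ref{def:DFA}; $X_{m,G}\subseteq X_G$ holds unchanged; and $x_{0,G}\in X_G$ by construction. Deleting $x_0$ removes exactly the $\epsilon$-labelled transitions, since by Definition \ref{def:NFA} the only $\epsilon$-transitions of an $\epsilon_0$-NFA emanate from $x_0$; hence the reduced automaton has event set $E_G$ with no $\epsilon$ symbol remaining, and I would also note in passing that no transition can lead back into $x_0$, because the codomain of ${}^{\epsilon_0}\!f_G$ is $2^{X_G}$ and $x_0\notin X_G$, so the deletion orphans no remaining transition.

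The one point that genuinely requires an argument is that $f_G$, so defined, is a well-formed DFA transition function, i.e. a partial function $X_G\times E_G\rightarrow X_G$ consistent with $\Gamma_G$, rather than merely a map into $2^{X_G}$. This is immediate from the compatibility clause of Definition \ref{def:NFA}: for every $(x,e)\in X_G\times E_G$ one has ${}^{\epsilon_0}\!f_G(x,e) = f_G(x,e)$, where the right-hand side is the partial, $X_G$-valued transition function of the DFA on which Definition \ref{def:NFA} is built. Thus the restriction of ${}^{\epsilon_0}\!f_G$ to $X_G\times E_G$ is single-valued and $X_G$-valued, it is partial on its domain in exactly the required sense, and it respects $\Gamma_G$ (if $e\in\Gamma_G(x)$ then $e$ labels the transition to the unique state $f_G(x,e)$), since all these properties already held for the original DFA.

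I do not expect a real obstacle here; the corollary is essentially a consistency check confirming that Definition \ref{def:NFA} was engineered so that the added state $x_0$ together with its $\epsilon$-transitions forms a removable ``hat'' sitting on top of an ordinary DFA. If anything, the only place to be slightly careful is to make the single-valuedness of the restricted transition function explicit as above, and to observe that the choice of $x_{0,G}$ is unconstrained — so the conversion yields a DFA for every admissible initial state, which is exactly the freedom that later allows the same system model to be reused across different initial conditions.
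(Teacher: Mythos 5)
Your proof is correct and follows essentially the same route as the paper, which simply observes that the six-tuple obtained after removing $x_0$ and its $\epsilon$-transitions matches Definition \ref{def:DFA}; you just make explicit the single-valuedness of the restricted transition function via the clause $\forall (x,e) \in X_G \times E_G: f_G(x,e) = {}^{\epsilon_0}\!f_G(x,e)$ in Definition \ref{def:NFA}. No gap — your version is the same argument carried out in fuller detail.
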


\begin{proof}
This can be trivially shown by observing that the six-tuple obtained by the operation is as the one in Definition \ref{def:DFA}.
\end{proof}

Based on the above Corollary, we can construct the following operator:

\begin{definition}[]
Let ${}^{\epsilon_0} G$ be an $\epsilon_0$-NFA and $x_{0,G}\in X_G $ a state in $X_G$ that we want to assign as an initial state. Then \[\Delta({}^{\epsilon_0} G, x_{0,G}) \triangleq G \]
where $G$ is as in Definition \ref{def:DFA} and the operator $\Delta$ performs the conversion as described in Corollary \ref{cor:NFA2DFA}.
\label{def:deltaNFA2DFA}
\end{definition}

We need to associate a cost function with the transitions:
\begin{definition}[Transition Cost Function] A transition cost function for an event set $E_G$, is defined as $g_G: E_G \rightarrow  \mathbb{R}_{>0}$.
\end{definition}

We define the inverse transition function that implies the transition relation backward derived from the transition function.
\begin{definition}[Inverse Transition Function]\label{def:inverseTF}
Let $G$ be a DFA. Define $f^{-1}_G: X_G \times E_G \rightarrow X_G$ to be the inverse transition function such that $f_G(x_G,e)=y_G \wedge f^{-1}_G(y_G,e)=x_G$ for some $x_G: e \in \Gamma_G(x_G)$, $y_G: e \in \Gamma_G(f^{-1}_G(y_G,e))$.
\end{definition}

For the automata operations that will be required in the sequel, we need to introduce the following concept:
\begin{definition}[Compatible $\epsilon_0$-NFAs]\label{def:compatibleDFAs}
Let ${}^{\epsilon_0} G$ and ${}^{\epsilon_0} \!B$ be $\epsilon_0$-NFAs.  Let $E_C=E_G \cap E_B$. ${}^{\epsilon_0} G$ and ${}^{\epsilon_0} \!B$ are compatible iff $\forall e\in E_C$ then\footnote{Note that due to Corollary \ref{cor:NFA2DFA} we have that ${}^{\epsilon_0} \!f_G (x_G,e) \equiv f_G (x_G,e)$ and ${}^{\epsilon_0} \!f_B (x_B,e) \equiv f_B (x_B,e)$.}, $f_G (x_G,e) = f_B (x_B,e)$ and  $f^{-1}_G(y_G,e) = f^{-1}_B(y_B,e)$ for some $x_G : e\in \Gamma_G(x_G)$,  $x_B :  e\in \Gamma_B(x_B)$,  $y_G : e\in \Gamma_G(f^{-1}_G(y_G,e))$ and $y_B : e\in \Gamma_B(f^{-1}_B(y_B,e))$. In such case it will also be true that $x_G=x_B$ and $y_G=y_B$. We denote such a compatibility relation as ${}^{\epsilon_0} G \asymp {}^{\epsilon_0} B$.
\end{definition}
Note that the above definition effectively forces the endpoints of common events to be common states.

\subsection{Operations on compatible automata}
Here we introduce a custom set of the basic operations on compatible $\epsilon_0$-NFAs: union, subtraction and concatenation. The introduced operations differ from the ones found in the automata literature \cite{cassandras2009introduction, hopcroft2006automata, khoussainov2012automata} and provide the  necessary functionality for the   subsequent developments.

\begin{definition}[Union of Compatible Automata]\label{def:union}
For the $\epsilon_0$-NFAs ${}^{\epsilon_0} G$ and ${}^{\epsilon_0} \!B$, assume ${}^{\epsilon_0} G \asymp {}^{\epsilon_0} \!B$. Define the compatible automata union ${}^{\epsilon_0} \mathcal{P} \triangleq {}^{\epsilon_0} G \cup_{\asymp} {}^{\epsilon_0} \!B$ to be the six-tuple:
\[{}^{\epsilon_0} \mathcal{P} = (X_{\mathcal{P}} \cup \{x_0\}, E_\mathcal{P} \cup \{\epsilon\}, {}^{\epsilon_0} \!f_\mathcal{P}, \Gamma_\mathcal{P}, x_0, X_{m,\mathcal{P}} )\] 
where $X_\mathcal{P} = X_G \cup X_B $, $E_\mathcal{P} = E_G \cup E_B$, ${}^{\epsilon_0} \!f_\mathcal{P}: X_\mathcal{P} \cup \{x_0 \} \times E_\mathcal{P} \cup \{\epsilon\} \rightarrow 2^{X_\mathcal{P}}$ such that $\forall (x,e) \in X_G \times E_G: {}^{\epsilon_0} \!f_\mathcal{P}(x,e) = f_G(x, e)$ and $\forall (x,e) \in X_B \times E_B: {}^{\epsilon_0} \!f_\mathcal{P}(x,e) = f_B(x, e)$, $\Gamma_\mathcal{P}: X_{\mathcal{P}} \rightarrow 2^{E_\mathcal{P}}$, $x_0 \notin X_{\mathcal{P}}$ and $X_{m,\mathcal{P}} = X_{m,G} \cup X_{m,B}$.
\end{definition}

In contrast with the standard union operation on automata presented in the literature, the above operation produces the union instead of the cartesian product state space.

\begin{definition}[Subtraction of Compatible Automata]\label{def:subtraction}
For the $\epsilon_0$-NFAs ${}^{\epsilon_0} G$ and ${}^{\epsilon_0} \!B$, assume ${}^{\epsilon_0} G \asymp {}^{\epsilon_0} \!B$. Define the compatible automata subtraction ${}^{\epsilon_0} \Theta \triangleq {}^{\epsilon_0} G \setminus_{\asymp} {}^{\epsilon_0} B$ to be the six-tuple:
\[ {}^{\epsilon_0} \Theta = (X_{\Theta} \cup \{x_0\}, E_{\Theta} \cup \{\epsilon\}, {}^{\epsilon_0} \!f_{\Theta}, \Gamma_{\Theta}, x_0, X_{m,\Theta} ) \]
where $X_{\Theta}= X_G$, $E_{\Theta} = E_G \setminus E_B$, ${}^{\epsilon_0} \!f_\Theta: X_\Theta \cup \{x_0 \} \times E_\Theta \cup \{\epsilon\} \rightarrow 2^{X_\Theta}$ such that $\forall (x,e) \in X_\Theta \times E_\Theta: {}^{\epsilon_0} \!f_\Theta(x,e) = f_\Theta(x, e)$, $\Gamma_{\Theta} : X_{\Theta} \rightarrow 2^{E_{\Theta}}$, $x_0 \notin X_\Theta$, $X_{m,\Theta} = X_{m,G} \setminus X_{m,B}$.
\end{definition}

Note that this is more in line with the set difference operator than with the intersection with the language complement that is used in regular languages.

\begin{definition}[Concatenation of Compatible Automata]\label{def:concatenation}
For the $\epsilon_0$-NFAs ${}^{\epsilon_0} G$ and ${}^{\epsilon_0} \!B$, assume ${}^{\epsilon_0} G \asymp {}^{\epsilon_0} \!B$. Define the compatible automata concatenation ${}^{\epsilon_0} \Phi \triangleq {}^{\epsilon_0} G \indep _\asymp {}^{\epsilon_0} \! B$ to be the six-tuple:
\[ {}^{\epsilon_0} \Phi = (X_{\Phi} \cup \{x_0\}, E_{\Phi} \cup \{\epsilon\}, {}^{\epsilon_0} \!f_{\Phi}, \Gamma_{\Phi}, x_0, X_{m,\Phi} ) \]
where $X_{\Phi}=\{ uv | u \in X_G, v \in X_B\}$, $E_{\Phi}=E_G \cup E_B$, ${}^{\epsilon_0} \!f_\Phi: X_\Phi \cup \{x_0 \} \times E_\Phi \cup \{\epsilon\} \rightarrow 2^{X_\Phi}$ such that $\forall (x,e) \in X_\Phi \times E_\Phi: {}^{\epsilon_0} \!f_\Phi(x,e) = f_\Phi(x,e)$, $\Gamma_{\Phi}: X_{\Phi} \rightarrow 2^{E_\Phi}$, $x_0 \notin X_\Phi$, $X_{m,\Phi}= \{ uv | u \in X_{m,G} \wedge v \in X_{m,B} \} $.
\end{definition}

Note that the operation of concatenation in Definition \ref{def:concatenation} above, is similar to the well known parallel composition operation (see e.g. \cite{cassandras2009introduction}) but differs in that the event sets undergo a disjoint union in our case to ensure that only one event can be activated at a time (no synchronization on common events).

The concept of compatible automata, allows us to recover the following  properties for the operations of Definitions \ref{def:union},  \ref{def:subtraction} and \ref{def:concatenation}:
\begin{corollary}
The automata resulting from the operations defined in Definitions \ref{def:union},  \ref{def:subtraction} and \ref{def:concatenation} are $\epsilon_0$-NFAs.
\label{cor:ResultNFA}
\end{corollary}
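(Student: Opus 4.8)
The plan is to check each of the three output six-tuples against the defining template of Definition~\ref{def:NFA}. Concretely, for a candidate tuple $(\,\cdot\cup\{x_0\},\,\cdot\cup\{\epsilon\},\,\cdot,\,\cdot,\,x_0,\,\cdot\,)$ to be an $\epsilon_0$-NFA one needs: a finite state set $X$ together with a fresh element $x_0\notin X$; a finite event set $E$ adjoined with $\epsilon$; a transition map of signature $X\cup\{x_0\}\times E\cup\{\epsilon\}\to 2^{X}$ whose restriction to $X\times E$ coincides with the underlying (partial) DFA transition function; an active-event map $X\to 2^{E}$; initial state $x_0$; and a marked set contained in $X$. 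Since Definitions~\ref{def:union},~\ref{def:subtraction} and~\ref{def:concatenation} already present their outputs in six-tuple form, the task reduces to confirming these items, the only nontrivial one being well-definedness of the stated transition maps.

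For the union ${}^{\epsilon_0}\mathcal{P}={}^{\epsilon_0}G\cup_{\asymp}{}^{\epsilon_0}B$, I would note that $X_{\mathcal{P}}=X_G\cup X_B$ is finite with $x_0\notin X_{\mathcal{P}}$ (stipulated in Definition~\ref{def:union} and consistent with $x_0\notin X_G,X_B$), that $E_{\mathcal{P}}=E_G\cup E_B$ is finite, and that $X_{m,\mathcal{P}}=X_{m,G}\cup X_{m,B}\subseteq X_{\mathcal{P}}$; the signatures of ${}^{\epsilon_0}f_{\mathcal{P}}$ and $\Gamma_{\mathcal{P}}$ are as required by construction. The single real point is that the two defining clauses ${}^{\epsilon_0}f_{\mathcal{P}}(x,e)=f_G(x,e)$ on $X_G\times E_G$ and ${}^{\epsilon_0}f_{\mathcal{P}}(x,e)=f_B(x,e)$ on $X_B\times E_B$ agree on the overlap of their domains, i.e. whenever $e\in E_G\cap E_B$ is active. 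This is exactly what the hypothesis ${}^{\epsilon_0}G\asymp{}^{\epsilon_0}B$ buys us: by Definition~\ref{def:compatibleDFAs}, for every common event the source and target states coincide across the two automata, so the clauses are consistent and ${}^{\epsilon_0}f_{\mathcal{P}}$ is a well-defined function whose restriction to $X_{\mathcal{P}}\times E_{\mathcal{P}}$ is the partial map glued from $f_G$ and $f_B$.

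The subtraction ${}^{\epsilon_0}\Theta$ and concatenation ${}^{\epsilon_0}\Phi$ are then handled by the same template. For ${}^{\epsilon_0}\Theta$ one has $X_\Theta=X_G$ finite, $x_0\notin X_\Theta$, $E_\Theta=E_G\setminus E_B$, $X_{m,\Theta}=X_{m,G}\setminus X_{m,B}\subseteq X_\Theta$, and ${}^{\epsilon_0}f_\Theta$ is just $f_G$ cut down to the retained events, which is unambiguous; the only caveat I would flag is to assume $E_G\not\subseteq E_B$ (or treat the empty-alphabet case as vacuous) so that $E_\Theta$ is non-empty. For ${}^{\epsilon_0}\Phi$ one has $X_\Phi=\{uv\mid u\in X_G,\,v\in X_B\}$ finite, $x_0\notin X_\Phi$, $E_\Phi=E_G\cup E_B$ taken as a \emph{disjoint} union (so no synchronization, per the remark following Definition~\ref{def:concatenation}), and $X_{m,\Phi}=\{uv\mid u\in X_{m,G},\,v\in X_{m,B}\}\subseteq X_\Phi$; since the alphabets are kept disjoint, each product state has an unambiguous image under ${}^{\epsilon_0}f_\Phi$ (the component owning the fired event advances, the other stays put), so no conflicting assignments arise. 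I expect the main — indeed the only — obstacle to be the well-definedness argument for the union, which is precisely the purpose for which the compatibility relation was introduced; the remaining verifications amount to reading off the constructed tuples and matching them line by line against Definition~\ref{def:NFA}.
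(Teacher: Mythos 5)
Your proposal is correct and follows essentially the same route as the paper's proof: verify that each construction yields a six-tuple matching Definition~\ref{def:NFA}, with the only substantive point being well-definedness of the combined transition function, which the compatibility relation of Definition~\ref{def:compatibleDFAs} guarantees by forcing common events to share source and target states. Your per-operation treatment (and the flag about $E_\Theta$ possibly becoming empty under subtraction) is merely a more detailed spelling-out of the paper's uniform one-line argument, not a different approach.
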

\begin{proof}
This is trivially established by noting that for the operations in Definitions \ref{def:union},  \ref{def:subtraction} and \ref{def:concatenation}, the resulting automata are six-tuples as in Definition \ref{def:NFA}. Moreover if we assume that $R$ is the resulting automaton from each operation, then due to the properties of Definition \ref{def:compatibleDFAs}, it will hold that for each operation $\forall x \in X_R$, then $\forall e \in \Gamma_{G}(x) \cap \Gamma_{B}(x)$ (if such exists)  such that $f_{G} (x,e) = f_{B} (x,e)$. Hence, $R$ will be an $\epsilon_0$-NFA since it fulfils the requirements of Definition \ref{def:NFA}.
\end{proof}

The principle of completeness asserts that the algorithm always returns a solution (if one exists), otherwise if there is no solution, the algorithm reports failure. 

We need to introduce the following operator that addresses individual states of  concatenated states of Definition \ref{def:concatenation}.
\begin{definition}[Projection Operator]\label{def:projection}
Assume $x \in X_G$ is a state of $G$ with $|x|=n$ and let $x_i$ denote the $i$'th element of $x$. Define the projector $b$ as an $n$-bit binary and $b_i$ its $i$'th bit. The projection operator is defined as $proj(x,b)\triangleq \left\{x_i|b_i=1\right\}$.
\end{definition}

We will need some definitions from MCP literature as well as some new ones for our development. Using the module definition by \cite{tripakis2003automated}, we state the module in an appropriate form for our development.
\begin{definition}[Single Port Module]\label{def:taskmodule}
Let $G$ be a DFA. Consider the finite set of ports $P=P_{in} \cup P_{out}$, where $P_{in}$ be a non-empty finite set of input ports and $P_{out}$ be a non-empty finite set of output ports with $P \subseteq X_G$ and $P_{in} \cap P_{out} \neq \emptyset$. 
We define the single input/single output port (I/O) module $\Tau$ as a 3-tuple $\Tau \triangleq \{ p, e, q \}$, where $p \in P_{in}$ is the input port, $e \in E_G: f_G(p,e)=q$ and $q \in P_{out}$ is the output port.
\end{definition}
$\Tau$ provides an output $q$ as a response to the corresponding input $p$. The cost associated with the module $\Tau$ is defined as $c(\Tau) \triangleq g_G(e)$. We can also define the inverted module $\Tau^{-1}$ where its input becomes its output and vice versa such that Definition \ref{def:inverseTF} holds. 

We consider the notion of directional compatibility of single port modules that specifies the interaction between the modules.
\begin{definition}[Directional Compatibility of I/O Modules]\label{def:directionalCompatibility}
Let $G$ be a DFA and $\Tau_\chi = \{ p_\chi, e_\chi, q_\chi \}$ and $\Tau_\psi= \{ p_\psi, e_\psi, q_\psi \}$ be modules. We define the directional compatibility relation between $\Tau_\chi$ and $\Tau_\psi$ and we write $\Tau_\chi \rightharpoonup \Tau_\psi$ iff $q_{\chi} = p_{\psi}$.
\end{definition}

\section{PROBLEM FORMULATION}
\label{sec:PROB_FORM}
\subsection{Agent Model}
Agents are considered as autonomous entities, such as humans, robots, items, machines or anything that it could change its status to act on or react to its surroundings during a process or trigger event. An agents is modeled as an $\epsilon_0$-NFA composed by the agent's capabilities and constraints. Agent's capabilities represent the allowed state transitions derived from the combination of individual capabilities and failure mode, expressed as $\epsilon_0$-NFAs. Agent's constraints express the forbidden state transitions derived from the union of individual's constraints modeled as $\epsilon_0$-NFAs. The agent model is produced by the subtraction of the agent's constraints $\epsilon_0$-NFA from the agent's capabilities $\epsilon_0$-NFA.

Let $n$ be the total number of agents and ${}^{\epsilon_0} \!\mathcal{A}$ be the finite set of the agents $\epsilon_0$-NFAs, where $n = |{}^{\epsilon_0} \!\mathcal{A}|$. We represent the $i^{th}$ agent as ${}^{\epsilon_0} \!A_i$, $i \in \{1, \ldots n\}$. 
\subsubsection{Individual Agent Capabilities}
Let $\kappa$ be the total number of individual capabilities of ${}^{\epsilon_0} \!A_i$ and ${}^{\epsilon_0} \!M_{\beta,i}$ the $\beta$'th individual capability of ${}^{\epsilon_0} \!A_i$, where $\beta \in \{1, \ldots \kappa\}$.

\subsubsection{Individual Agent Failure Mode}
Consider a state $q' \in X_{M_{\beta,i}}$. We define a failure mode of ${}^{\epsilon_0} \!A_i$ as the inability to complete the transition from some $q\in X_{M_{\beta,i}}$ to $q'$ with an occurrence of event $e \in E_{M_{\beta,i}}$. This describes a detected transition failure of ${}^{\epsilon_0} \!A_i$ which renders  $f_{M_{\beta,i}}(q, e)=q'$ infeasible. This failure mode is modeled by the $\epsilon_0$-NFA ${}^{\epsilon_0} \!F_i$ such that $X_{F_i}=\{q, q'\}$ and $E_{F_i}=\{e\}$.

We model the agent's capabilities as the subtraction of the agent failure from the union of individual agent capabilities utilizing the union and the subtraction operations of compatible automata.
\subsubsection{Agent Capabilities}
Considering $\kappa$ compatible $\epsilon_0$-NFAs such that ${}^{\epsilon_0} \!M_{\alpha,i} \asymp {}^{\epsilon_0} \!M_{\beta,i}$, $\alpha \neq \beta$ with $\alpha, \beta \in \{1,\ldots \kappa\}$ and ${}^{\epsilon_0} \!F_i \asymp {}^{\epsilon_0} \!M_{\beta,i}$, the capabilities of ${}^{\epsilon_0} \!A_i$ are modeled by the $\epsilon_0$-NFA ${}^{\epsilon_0} \!K_i$ as:
\begin{equation}
\label{eq:agents_cap}
\small{{}^{\epsilon_0} \!K_i \triangleq \Bigg\{ \mathop{\bigcup}_{\beta=1}^{\kappa} {_\asymp}  \ {}^{\epsilon_0} \!M_{\beta,i} \Bigg\} \backslash_\asymp {}^{\epsilon_0} \!F_i .}
\end{equation}

\subsubsection{Individual Agent Constraints}
Let $\lambda$ be the total number of individual constraints of ${}^{\epsilon_0} \!A_i$ and ${}^{\epsilon_0} \!N_{\xi,i}$ the $\xi$'th individual constraint of ${}^{\epsilon_0} \!A_i$, where $\xi \in \{1,\ldots \lambda\}$.

We model the agent's constraints as the union of individual agent constraints utilizing the union of compatible automata operation. 
\subsubsection{Agent Constraints}
Considering $\lambda$ compatible $\epsilon_0$-NFAs such that ${}^{\epsilon_0} \!N_{\xi,i} \asymp {}^{\epsilon_0} \!N_{\eta,i}$, $\xi \neq \eta$ with $\eta \in \{1,\ldots \lambda\}$, the constraints of ${}^{\epsilon_0} \!A_i$ are modeled by the $\epsilon_0$-NFA ${}^{\epsilon_0} \!D_i$ as:
\begin{equation}
\small{{}^{\epsilon_0} \!D_i \triangleq \Bigg\{ \mathop{\bigcup}_{\xi=1}^{\lambda} {_\asymp} \ {}^{\epsilon_0} \!N_{\xi,i} \Bigg\}.}
\label{eq:agents_constr}
\end{equation}

Considering ${}^{\epsilon_0} \!K_i \asymp {}^{\epsilon_0} \!D_i$, the agent $i^{th}$ agent is modeled by the $\epsilon_0$-NFA of ${}^{\epsilon_0} \!K_i$ after subtracting ${}^{\epsilon_0} \!D_i$:
\begin{equation}
{}^{\epsilon_0} \!A_i \triangleq {}^{\epsilon_0} \!K_i \ \backslash_\asymp \ {}^{\epsilon_0} \!D_i .
\label{eq:agent}
\end{equation}

\subsection{Environment Model}
\label{section:environment_model}
Agents are acting in the environment to reach individual states while being influenced by state capabilities and constraints relating to other agents. Agents who rely on other agents to perform actions or reach goals are grouped into a team. The capabilities and constraints of the team are modeled as a combination of individual states of the members of the team. These capabilities and constraints, called inter-agent capabilities and constraints, express the allowed and not-permitted environment state transitions of those agents. Given the agents' capabilities and constraints and the inter-agent capabilities and constraints, the environment model captures all the possible combinations of agents' states. The procedure of constructing agents' and environment models is illustrated in Fig.~\ref{fig:flow_diagram}, as well.

\begin{figure}[ht]
\centering
\includegraphics[scale=0.15]{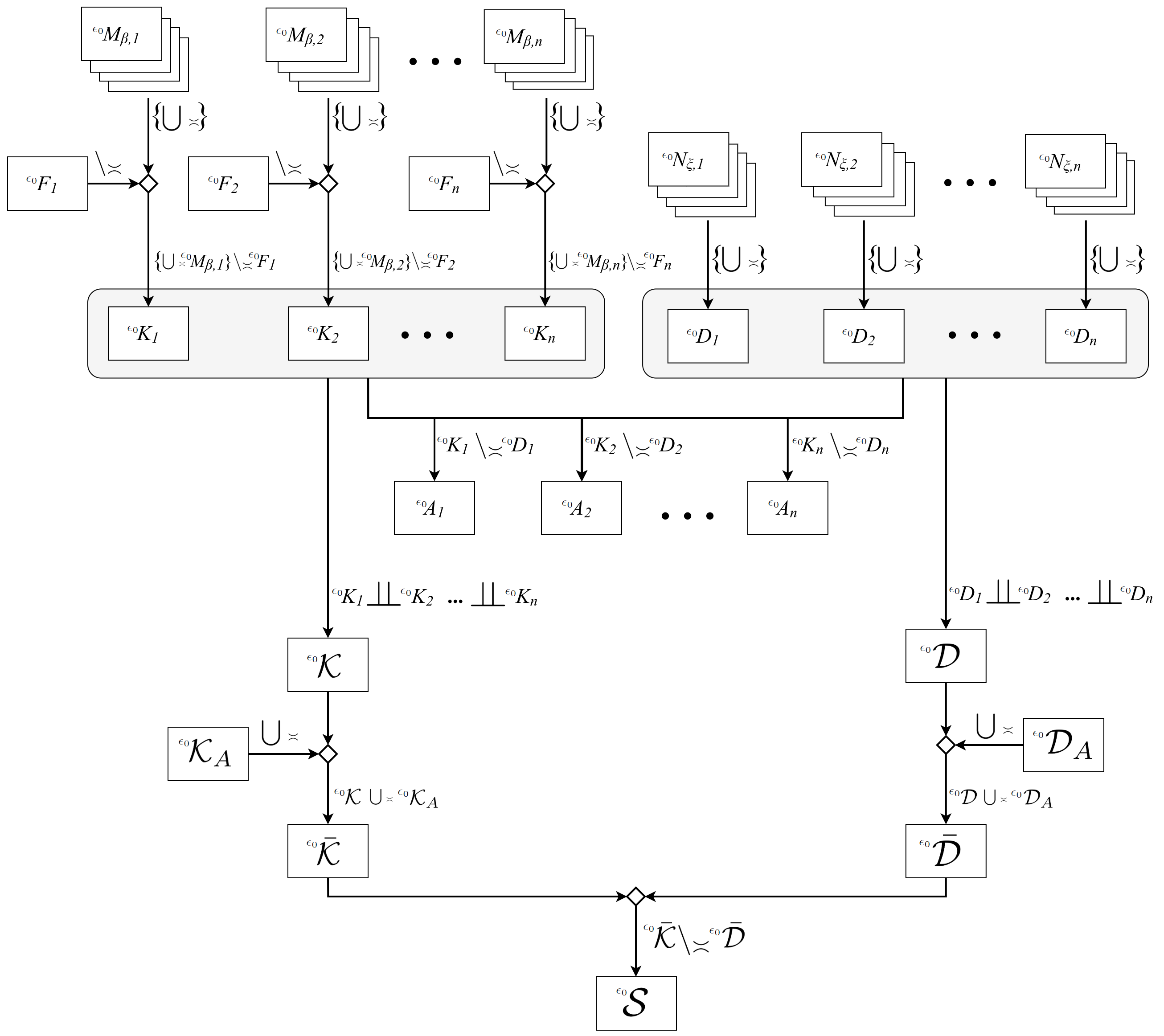}
\caption{Information flow diagram for construction of the agents' and environment models $\epsilon_0$-NFAs.}
\label{fig:flow_diagram}
\end{figure}

We model the environmental capabilities as the concatenation of $\epsilon_0$-NFAs formed as in eq. \ref{eq:agents_cap} utilizing the concatenation operation of compatible automata to express the allowed environment state transitions.
\subsubsection{Environmental Capabilities}
For the $\epsilon_0$-NFAs ${}^{\epsilon_0} \!K_i$ and ${}^{\epsilon_0} \!K_j$, $i \neq j$ with $i, j \in \{1,\ldots n\}$, let ${}^{\epsilon_0} \!K_i \asymp {}^{\epsilon_0} \!K_j$. The environmental capabilities are modeled by the $\epsilon_0$-NFA ${}^{\epsilon_0} \!\mathcal{K}$ as:
\begin{equation}
{}^{\epsilon_0} \!\mathcal{K} \triangleq \mathop{\indep _\asymp}_{i=1}^{n} {}^{\epsilon_0} \!K_i.    
\label{eq:environmental_caps}
\end{equation}

We model the environmental constraints as the concatenation of $\epsilon_0$-NFAs formed as in eq. \ref{eq:agents_constr} utilizing the concatenation operation of compatible automata to express the not-permitted environment state transitions.
\subsubsection{Environmental Constraints}
For the $\epsilon_0$-NFAs ${}^{\epsilon_0} \!D_i$ and ${}^{\epsilon_0} \!D_j$, $i \neq j$ with $i, j \in \{1,\ldots n\}$, let ${}^{\epsilon_0} \!D_i \asymp {}^{\epsilon_0} \!D_j$. The environmental constraints are modeled by the $\epsilon_0$-NFA ${}^{\epsilon_0} \mathcal{D}$ as:
\begin{equation}
{}^{\epsilon_0} \mathcal{D} \triangleq \mathop{\indep _\asymp}_{i=1}^{n} {}^{\epsilon_0} \!D_i.    
\label{eq:environmental_cons}
\end{equation}

Then, inter-agents capabilities $\epsilon_0$-NFA and inter-agents constraints $\epsilon_0$-NFA are modeled utilizing the agents' models formed as in eq.\ref{eq:agent} as follows:
\subsubsection{Inter-Agents Capabilities}
\label{def:inter_caps}
Let $\mathrm{co} \subseteq {}^{\epsilon_0} \!\mathcal{A}$ with $\|\mathrm{co}\|  \le n $ be a set of compatible agent $\epsilon_0$-NFAs. Then, the $\epsilon_0$-NFA ${}^{\epsilon_0} \!\mathcal{K}_A \subset \mathop{\indep _\asymp}_{i\in \mathrm{co}} {}^{\epsilon_0} \!A_i$ denotes the inter-agents capabilities\footnote{Inter-Agents Failure Modes can be defined so as to restrict Inter-Agents Capabilities} between the members of $\mathrm{co}$.

\subsubsection{Inter-Agents Constraints}
\label{def:inter_cons}
Let $\mathrm{co} \subseteq {}^{\epsilon_0} \!\mathcal{A}$ with $\|\mathrm{co}\|  \le n $ be a set of compatible agent $\epsilon_0$-NFAs. Then, the $\epsilon_0$-NFA ${}^{\epsilon_0} \mathcal{D}_A \subset \mathop{\indep _\asymp}_{i\in \mathrm{co}} {}^{\epsilon_0} \!A_i $ denotes the inter-agents constraints between the members of $\mathrm{co}$.

We model the global capabilities as the $\epsilon_0$-NFA of environmental capabilities (as formed in eq.\ref{eq:environmental_caps}) utilizing the union operation of environmental capabilities with inter-agent capabilities.
\subsubsection{Global Capabilities}
For the $\epsilon_0$-NFAs ${}^{\epsilon_0} \!\mathcal{K}$ and ${}^{\epsilon_0} \mathcal{K}_A$, let ${}^{\epsilon_0} \!\mathcal{K} \asymp {}^{\epsilon_0} \mathcal{K}_A$. The global capabilities are modeled by the $\epsilon_0$-NFA ${}^{\epsilon_0} \!{\widetilde{\mathcal{K}}}$ defined as:
\begin{equation}
{}^{\epsilon_0} \!{\widetilde{\mathcal{K}}} \triangleq {}^{\epsilon_0} \!\mathcal{K} \cup_{\asymp} {}^{\epsilon_0} \mathcal{K}_A.   
\label{eq:global_caps}
\end{equation}

We model the global constraints as the $\epsilon_0$-NFA of environmental constraints (as formed in eq.\ref{eq:environmental_cons}) utilizing the union operation of environmental constraints with inter-agent constraints.
\subsubsection{Global Constraints}
For the $\epsilon_0$-NFAs ${}^{\epsilon_0} \mathcal{D}$ and ${}^{\epsilon_0} \!\mathcal{D}_A$, let ${}^{\epsilon_0} \mathcal{D} \asymp {}^{\epsilon_0} \!\mathcal{D}_A$. The global constraints are modeled by the $\epsilon_0$-NFA ${}^{\epsilon_0} {\widetilde{\mathcal{D}}}$ defined as:
\begin{equation}
{}^{\epsilon_0} {\widetilde{\mathcal{D}}} \triangleq  {}^{\epsilon_0} \mathcal{D} \cup_{\asymp} {}^{\epsilon_0} \!\mathcal{D}_A.
\label{eq:global_cons}
\end{equation}

Considering ${}^{\epsilon_0} \!\widetilde{\mathcal{K}} \asymp {}^{\epsilon_0} \widetilde{\mathcal{D}}$, the environment model is constructed by the $\epsilon_0$-NFA of ${}^{\epsilon_0} \!\widetilde{\mathcal{K}}$ after subtracting ${}^{\epsilon_0} \widetilde{\mathcal{D}}$: 
\begin{equation}
{}^{\epsilon_0} \!\mathcal{S} \triangleq {}^{\epsilon_0} \!\widetilde{\mathcal{K}} \backslash_\asymp {}^{\epsilon_0} \widetilde{\mathcal{D}}
\label{eq:environment}
\end{equation}
where the cardinality of $X_\mathcal{S}$ is $\theta=\prod_{i=1}^n |X_{A_i}|$. 

\subsection{Task Specification}
Let $x \in X_\mathcal{S}$ and let $b$ be an $n$-bit binary indicating the states of the agents that we are interested in specifying. The task specification is a projection $\gamma \triangleq proj(x,b)$ that indicates the desired state of specific agents in the system.

We can now proceed to formally state the following.
\subsection{Problem statement}

Given a set of individual and inter-agents capabilities and constraints including their failure modes, the initial state of the system and a task specification, determine a string that provides an optimal execution\footnote{Automata execution (or run) as defined in e.g. \cite{cassandras2009introduction}} (in terms of total transition cost) that brings the system from any initial state to any state satisfying the task specification. 

\section{FORMULATION AS A MODULE COMPOSITION PROBLEM}
\label{sec:MC_PF}
Let us now consider the environment model ${}^{\epsilon_0} \! \mathcal{S}$ and assume that we would like to use  $x_{0,\mathcal{S}} \in X_\mathcal{S}$ as the initial state of our system. Thus, $\mathcal{S} = \Delta({}^{\epsilon_0} \! \mathcal{S}, x_{0,\mathcal{S}})$ is the DFA description of the environment model. Let $P_{in} \in X_\mathcal{S}$ be the non-empty finite set of input ports and $P_{out} \in X_\mathcal{S}$ be the non-empty finite set of output ports, where $P = P_{in} \cup P_{out}$ and $P_{in} \cap P_{out} \ne \emptyset$. The module $\Tau_j$ is defined as:
\begin{equation}
\begin{gathered}
\Tau_j \triangleq \{ p_j, e_j, q_j \} 
\end{gathered}
\end{equation}
where $p_j \in P_{in}$, $e_j \in E_\mathcal{S}$ and $q_j \in P_{out}$. 

Define the $i^{th}$ task module  as $\Tau_{0,i} = \left\{x_{0,\mathcal{S}}, e_{0,i}, x_{d,i} \right\}$ where $proj(x_{d,i},b) = \gamma$, $x_{d,i} \in X_\mathcal{S}$ and $e_{0,i}$ a virtual transition from the initial to the final state. Observe that there are $|\gamma|!$ potential solutions so $i\in \left\{1,\ldots |\gamma|! \right\}$. 

To tackle the problem defined in the Problem Statement, we proceed to formulate our problem as a Module Composition Problem (MCP) \cite{tripakis2003automated}. Since we are using single-port modules, we will have a special case of the MCP that is solvable in polynomial time. We define $\mathcal{T}_i$ as the finite closed module chain containing $\Tau_{0,i}^{-1}$ describing the sequential environment states transitions during the execution (Fig. \ref{fig:chain-example}) defined as:
\begin{equation}
\mathcal{T}_i = \{\Tau_{0,i}^{-1},\Tau_{1,i},\ldots \Tau_{z,i} \}
\end{equation}
 where $z_i=|\mathcal{T}_i|$. In the sequel we will focus on the case where $|\gamma | =1$ and drop the index $i$.
\begin{figure}[ht]
\centering
\includegraphics[scale=0.23]{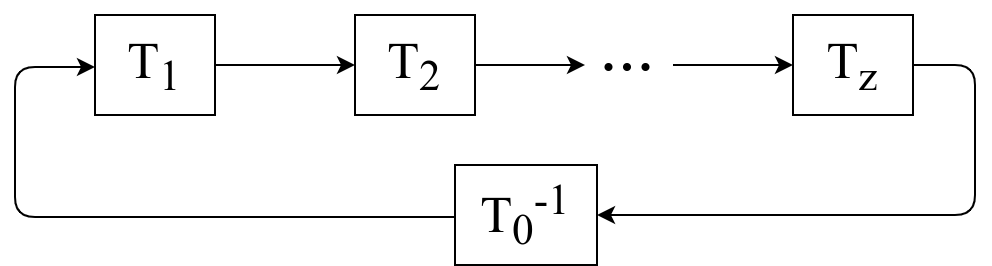}
\caption{Closed module chain $\mathcal{T}$ with sequential single-port modules.}
\label{fig:chain-example}
\end{figure}

Let $t_j$ denote the number of instances of $\Tau_j$, $c_j=c(T_j)$ is the cost of implementing module $T_j$, then the discrete optimization problem can be posed as an integer programming problem as follows:
\vspace{-.1in}
\begin{equation*}
min \sum_{j \in \left\{1,\ldots \left|E_{\mathcal{S}} \right|  \right\}} c_j t_j 
\vspace{-.1in}\end{equation*}
subject to:
\vspace{-.1in}\begin{subequations}\label{global}
\begin{align*}
\label{particular-a}
& t_0=1, t_j \in \mathbb{Z}^*  \\
& \sigma_p = \sum_{q \in P_{out}} w_{p,q}, \forall p \in P_{in} \\
& \mu_q = \sum_{p \in P_{in}} w_{p,q}, \forall q \in P_{out}
\end{align*}
\end{subequations}
where $\mathbb{Z}^*$ denotes the non-negative integers, $w_{p, q}$ the number of connections between input port $p$ and output port $q$, $\sigma_p$ the number of modules with input port $p$ utilized and $\mu_q$ the number of modules with output port $q$ utilized. Since in the current work we have implemented only single port modules, the above integer programming problem can be reduced to the shortest directed path problem \cite{Tripakis2003}, that can be solved utilizing the Dijkstra's shortest dipath algorithm \cite{dijkstra1959note}. The optimal solution composes the $\mathcal{T}$ that minimizes the cost of states transitions in order to fulfil the task specification $\gamma$. 

The drawback of using singe port modules in the current work is that we are only limiting transitions to be performed by one agent at a time. Multi-port modules are currently being considered as a further research topic, to enable multiple agents to perform concurrent transitions and is beyond the scope of the current work. We have to also note here that the additional effort in casting the problem as a module composition one, enables us to seamlessly use the generated module chain as a model system for building supervisory controllers as e.g. in \cite{loizou2005automated}, and is currently under active research.

\section{ANALYSIS}
\label{sec:Analysis}
\subsection{Task Planning Processes}

In this section, we present the properties and process of SPECTER task planner to compute the optimal plan based on the agents' and environment models. The SPECTER's process includes the pre-processing phase and the problem solving phase.

The pre-processing phase includes the construction of the agents models (\ref{eq:agent}), the environment model (\ref{eq:environment}) (see Fig.\ref{fig:flow_diagram}) and the graph of ${}^{\epsilon_0} \!\mathcal{S}$ using adjacency matrix representation. The computational time to construct ${}^{\epsilon_0} \! A_i$ is $O(|X_{A_i}|^2)$. The time required to create ${}^{\epsilon_0} \!\mathcal{S}$ is $O(|{X_\mathcal{S}|}^3)$. ${}^{\epsilon_0} \!\mathcal{S}$ is converted to a weighted directed graph $\mathcal{H_\mathcal{S}}=(\mathcal{V_\mathcal{S}}, \mathcal{E_\mathcal{S}})$, where the set of nodes $\mathcal{V_\mathcal{S}}$ corresponds to the set of states $X_{\mathcal{S}}$, the set of edges $\mathcal{E_\mathcal{S}}$ is defined by $f_\mathcal{S}$ associated with its cost $g_\mathcal{S}$. The adjacency matrix representation of $\mathcal{H_\mathcal{S}}$ is a 2-dimensional array $X_{\mathcal{S}} \times X_{\mathcal{S}}$. Each element in the array stores the cost $g_\mathcal{S}$ related to the edge $f_\mathcal{S}(x \in X_{\mathcal{S}}, e \in E_{\mathcal{S}})$. The amount of space required to store the array is $O(\left| \mathcal{V_{\mathcal{S}}} \right| ^2)$ in worst case. Computational efficiency can be succeeded if the pre-processing computations are made beforehand. The constructed environment model can then be used to solve all the possible  task planning problems by arbitrarily choosing the initial and target states (projection) without the need to reconstruct the environment model.

The problem solving phase encapsulates the synthesis of the optimal task plan and the integration of individual agent failure mode. The Dijkstra's algorithm is implemented over the weighted graph $\mathcal{H_\mathcal{S}}$ to find the optimal task plan $\mathcal{T}$ as given in Algorithm \ref{alg:algo-modulechain}. In the case where a fault is detected for the transition from state $x_i$ to $x_j$ of $\mathcal{S}$ (e.g. by some fault identification system) that affects agent $\nu$, we can disable all the affected transitions by finding the set of states ${X_i}'$, ${X_j}'$, where $proj(x_i, b_\nu) \equiv proj({x_i}' \in {X_i}', b_\nu)$ and $proj(x_j, b_\nu) \equiv proj({x_j}' \in {X_j}', b_\nu)$. Then, we eliminate the transitions from all ${x_i}' \in {X_i}' $ to all ${x_j}' \in {X_j}'$. The above procedures incorporates on-the-fly a possible new failure mode into $\mathcal{S}$ without the need to repeat the costly pre-processing phase. The computational time required for this modification is $\theta'= \prod_{i=1, \ i \neq \nu}^{n-1} |X_{A_i}|$ in the worst case. 

The task planning problem of minimizing the length of the plan is NP-hard \cite{barbehenn1998note}. Let $\mathcal{P}$ be the solution to this problem found after running Dijkstra's algorithm and let $\mathcal{P}_i$ denote it's $i$'th element, $i\in \left\{1,\ldots \left|\mathcal{P} \right| \right\}$. In Algorithm \ref{alg:funcomplete}, the optimal solution can then be found by running the algorithm for all states that satisfy the task specification, that is $\frac{\prod_{i=1}^{n} | X_{A_i}| }{\left| X_{A_{\sigma}}\right|}$ times in the worst case scenario, where $\sigma$ denotes the agent index that was used for the task specification $\gamma$. The running time of Dijkstra's algorithm implementing the Complete Function of Algorithm \ref{alg:algo-modulechain} is $O((\mathcal{V_\mathcal{S}} + \mathcal{E_\mathcal{S}}) log\mathcal{V_\mathcal{S}})$.

\begin{algorithm}[ht]
\centering
\caption{Create module chain $\mathcal{T}$.}
\begin{algorithmic}[1]
 \Require $\epsilon_0$-NFA ${}^{\epsilon_0} \! \mathcal{S}$, initial state $x_{0,\mathcal{S}}$, task specification $\gamma$, solution type $\text{ST}$ (``Complete'' or ``Heuristic'')
  \Ensure Module chain $\mathcal{T}$
  \State Initialize $\mathcal{H}$ to be a zero matrix
 \State Initialize $E$ to be an empty cell array
 \State Initialize $\mathcal{T}$ to an empty set
 \State $\mathcal{S} \longleftarrow \Delta({}^{\epsilon_0} \! \mathcal{S}, x_{0,\mathcal{S}})$
 \If  {$proj(x_{0,\mathcal{S}},b) =  \gamma \wedge x_{0,\mathcal{S}} \in X_{m,\mathcal{S}}$ }
    \State \Return $\mathcal{T}$
 \Else
 \For {$i \in \left\{1, \ldots \left|X_\mathcal{S}\right| \right\}$}
    \For {$j \in \left\{1, \ldots \left|X_\mathcal{S}\right| \right\}$}
        \If {$\exists e\in E_\mathcal{S}: f_{\mathcal{S}}(x_i,e)=x_j$}
            \State $\mathcal{H}[i][j] \longleftarrow g_{\mathcal{S}}(e)$
            \State $E\{i\}\{j\} \longleftarrow e$
        \EndIf
    \EndFor
 \EndFor
  \If {$\text{ST} = $ ``Complete''}
    \State [$\mathcal{P}$, cost] $\longleftarrow$ \Call{Complete}{$\mathcal{H}, \mathcal{S}$, $x_{0,\mathcal{S}}$, $\gamma$} 
 \Else
    \State [$\mathcal{P}$, cost] $\longleftarrow$ \Call{Heuristic}{$\mathcal{H}, \mathcal{S}$, E, $x_{0,\mathcal{S}}$, $\gamma$}
 \EndIf
 \For {$ i \in \left\{1, \ldots \left|\mathcal{P}\right|-1 \right\} $}
    \State $\mathcal{T} \longleftarrow \mathcal{T} \cup \left\{ \mathcal{P}_i,\  E\{\mathcal{P}_{i}\}\{\mathcal{P}_{i+1}\},\  \mathcal{P}_{i+1} \right\}$
 \EndFor
 \State \Return $\mathcal{T}$, cost
\EndIf
\end{algorithmic}
\label{alg:algo-modulechain}
\end{algorithm}

\begin{algorithm}[ht]
\centering
\caption{Complete Function.}
\begin{algorithmic}[1]
\Function{Complete}{$\mathcal{H}, \mathcal{S}$, $x_{0,\mathcal{S}}$, $\gamma$}
 \State Initialize $cost_{min} \longleftarrow \infty$
 \For {$i \in \left\{1, \ldots \left|X_{\mathcal{S}} \right| \right\}$}
    \For {$j \in \left\{1, \ldots \left|X_{\mathcal{S}} \right| \right\}$}
        \If {$proj(x_j, b) = \gamma \wedge x_j \in X_{m, \mathcal{S}}$}
            \State [$\mathcal{P}$, cost] $\longleftarrow$ Dijkstra($\mathcal{H}, x_{0,\mathcal{S}}, x_j$)
            \If {$\mathcal{P} = \emptyset$}
                \State \Return Task infeasible.
            \EndIf
            \If {cost $< cost_{min}$}
                \State $cost_{min} \longleftarrow $ cost
                \State $\mathcal{P}_{optimal} \longleftarrow \mathcal{P}$
            \EndIf
        \EndIf
    \EndFor
 \EndFor
 \State \Return $\mathcal{P}_{optimal}$, $cost_{min}$
\EndFunction
\end{algorithmic}
\label{alg:funcomplete}
\end{algorithm}

While this might be feasible for problems of relatively small size, a heuristic approach is proposed to reduce the computational time required to find a sub-optimal solution. The proposed heuristic, implemented in Algorithm \ref{algo:funheuristic}, maintains the complexity to the one of Dijkstra's algorithm, while sacrificing optimality and completeness. The computational time of the proposed heuristic is $O( \mathcal{E_{\mathcal{S}}} log \mathcal{V_\mathcal{S}})$. We track the solution to the minimum element $k$ where $proj(\mathcal{P}_k,b)=\gamma$ and use the solution $\mathcal{P}^* = \left\{ \mathcal{P}_1,\ldots \mathcal{P}_k \right\}$. While there are some cases where the proposed heuristic recovers optimality, a classification of those cases and the conditions for feasibility of solutions is currently under consideration but is not part of the current work. The intuition here is to set the goal state $x_d$ to be such that $proj(x_d,b)= \gamma$ and $proj(x_d,\overline b) = proj(x_{0,\mathcal{S}},\overline b)$ where $\overline b$ denotes the bitwise negation. 

\begin{algorithm}[ht]
\centering
\caption{Heuristic Function.}
\begin{algorithmic}[1]
\Function{Heuristic}{$\mathcal{H}, \mathcal{S}$, E, $x_{0,\mathcal{S}}$, $\gamma$}
 \State Initialize $x_d$ to an empty set
 \State Find $x_d \in X_{m, \mathcal{S}} : proj(x_d,b) = \gamma \wedge proj(x_d,\overline{b}) = proj(x_{0,\mathcal{S}},\overline{b}) $
 \State Initialize $c^* \longleftarrow 0$
 \State [$\mathcal{P}$, cost] $\longleftarrow$ Dijkstra($\mathcal{H}, x_{0,\mathcal{S}}, x_d$)
 \If {$\mathcal{P} = \emptyset$}
    \State \Return No path to $x_d$.
 \EndIf
 \For {$k \in \left\{2, \ldots \left| \mathcal{P} \right| \right\}$}
       \State $c^* \longleftarrow c^* + g_\mathcal{S} (E\{\mathcal{P}_{k-1}\}\{\mathcal{P}_{k}\})$
        \If {$proj(\mathcal{P}_k,b) = \gamma$}
            \State $\mathcal{P}^* \longleftarrow \left\{ \mathcal{P}_1,\ldots \mathcal{P}_k \right\} $    
            \State \Return $\mathcal{P}^*, c^*$
        \EndIf
    \EndFor
\EndFunction
\end{algorithmic}
\label{algo:funheuristic}
\end{algorithm}

\subsection{Completeness and Optimality}

We have the following results regarding the completeness of Algorithms \ref{alg:algo-modulechain}, \ref{alg:funcomplete}:
\begin{proposition}
\label{prop:complete}
For a system constructed based on individual and inter-agent capabilities, constraints and failure modes of the multi-agent system as presented in Section \ref{sec:PROB_FORM}, the resulting environment model of eq. \ref{eq:environment} is complete, in the sense that it represents all and only those state transitions that are dictated by the automata capturing the individual and inter-agent capabilities, constraints and failure modes.
\end{proposition}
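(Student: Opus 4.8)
The plan is to argue by induction on the construction pipeline of Section~\ref{sec:PROB_FORM} (summarised in Fig.~\ref{fig:flow_diagram}), maintaining the invariant that at every intermediate automaton the transition function, viewed as the set of admissible triples $(x,e,y)$ with $y={}^{\epsilon_0}\!f(x,e)$, coincides exactly with the set of transitions prescribed by the component automata under the semantics of $\cup_\asymp$, $\backslash_\asymp$ and $\indep_\asymp$. The base cases are the leaf automata ${}^{\epsilon_0}\!M_{\beta,i}$, ${}^{\epsilon_0}\!F_i$ and ${}^{\epsilon_0}\!N_{\xi,i}$, each of which trivially represents precisely its own transitions; Corollary~\ref{cor:ResultNFA} is invoked at every step so that each intermediate object is a well-formed $\epsilon_0$-NFA to which the next operation applies, and the compatibility hypotheses stated throughout Section~\ref{sec:PROB_FORM} guarantee that each operation is legitimate.

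First I would treat the union step (Definition~\ref{def:union}): since $X_\mathcal{P}=X_G\cup X_B$ and ${}^{\epsilon_0}\!f_\mathcal{P}$ agrees with $f_G$ on $X_G\times E_G$ and with $f_B$ on $X_B\times E_B$, its transition set is exactly the union of those of $f_G$ and $f_B$, with the ``agreement on common events'' clause of Definition~\ref{def:compatibleDFAs} ensuring the two partial definitions never conflict on the overlap. Applied to the inner union of eq.~\ref{eq:agents_cap} and to eq.~\ref{eq:agents_constr}, this shows the union of the ${}^{\epsilon_0}\!M_{\beta,i}$ carries exactly the pooled individual-capability transitions and ${}^{\epsilon_0}\!D_i$ exactly the pooled individual-constraint transitions. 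Next the subtraction step (Definition~\ref{def:subtraction}): $X_\Theta=X_G$ and ${}^{\epsilon_0}\!f_\Theta$ is $f_G$ with every transition whose label lies in $E_B$ deleted (and $X_{m,\Theta}=X_{m,G}\setminus X_{m,B}$ settles the marked-state bookkeeping), so the transition set of ${}^{\epsilon_0}\Theta$ is exactly that of $G$ minus the $E_B$-labelled ones. Instantiating this at eq.~\ref{eq:agents_cap} removes precisely the failing transition encoded by ${}^{\epsilon_0}\!F_i$, at eq.~\ref{eq:agent} removes precisely the constraint transitions of ${}^{\epsilon_0}\!D_i$, and at eq.~\ref{eq:environment} realises ${}^{\epsilon_0}\!\mathcal{S}$ as global capabilities minus global constraints.

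Then I would handle the concatenation step (Definition~\ref{def:concatenation}), used in eqs.~\ref{eq:environmental_caps} and \ref{eq:environmental_cons} and inside Definitions~\ref{def:inter_caps}--\ref{def:inter_cons}: the states become the product strings $uv$ ($u\in X_G$, $v\in X_B$), and because the event set of the concatenation is a \emph{disjoint} union of $E_G$ and $E_B$, every event belongs to exactly one factor, so ${}^{\epsilon_0}\!f_\Phi(uv,e)$ advances only that factor's coordinate and fixes the other. Hence the transition set of the concatenation is exactly the ``one agent moves at a time'' set dictated by the factors --- no transition altering two coordinates simultaneously can arise and no single-coordinate one is lost --- and iterating over $i=1,\dots,n$ yields ${}^{\epsilon_0}\!\mathcal{K}$ and ${}^{\epsilon_0}\mathcal{D}$ on the product state space of cardinality $\theta=\prod_{i=1}^{n}|X_{A_i}|$, while the inter-agent automata ${}^{\epsilon_0}\!\mathcal{K}_A$ and ${}^{\epsilon_0}\mathcal{D}_A$, which by Definitions~\ref{def:inter_caps}--\ref{def:inter_cons} are subautomata of the concatenation $\mathop{\indep_\asymp}_{i\in\mathrm{co}}{}^{\epsilon_0}\!A_i$ over the coalition, contribute only transitions already of the admissible form. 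Finally, composing the remaining layers --- the two unions of eqs.~\ref{eq:global_caps}--\ref{eq:global_cons} followed by the subtraction of eq.~\ref{eq:environment}, each just shown exact --- gives that ${}^{\epsilon_0}\!\mathcal{S}$ contains all and only the transitions dictated by the component automata, which is the assertion.

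The step I expect to be the genuine obstacle is not any single computation but making the invariant a precise and provable statement in the first place: one must fix once and for all what ``dictated by'' means for each operation, verify that the ``agreement on common events'' clause of Definition~\ref{def:compatibleDFAs} is invoked at exactly the places where a union or a subtraction could otherwise create a conflicting or orphaned image, and argue in the concatenation case that the disjoint-union relabelling of event sets is precisely what forbids simultaneous multi-agent transitions while preserving every legitimate single-agent one. The rest is routine definition chasing along the pipeline of Fig.~\ref{fig:flow_diagram}.
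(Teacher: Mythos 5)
Your proposal is correct and follows essentially the same route as the paper's proof: it establishes, operation by operation, that the compatible union, subtraction, and concatenation neither add nor remove states or transitions beyond what their arguments dictate (with compatibility, Definition~\ref{def:compatibleDFAs}, and Corollary~\ref{cor:ResultNFA} resolving overlaps and well-formedness), and then chains this exactness through the construction of eqs.~\ref{eq:agents_cap}--\ref{eq:environment}. Your packaging of this as an induction along the pipeline with an explicit ``transition-set'' invariant is merely a more structured presentation of the paper's case analysis, not a different argument.
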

\begin{proof}
To demonstrate the above claim we need to show that during the composition of ${}^{\epsilon_0} \!\mathcal{S}$, no valid transitions or states are being removed from the system and no new states or transitions are being introduced.

\begin{enumerate}[label=(\alph*)]
\item No states removed or added.

This can be shown by observing that from Corollary \ref{cor:ResultNFA} the $\epsilon_0$-NFAs is closed under the operations of union, subtraction and concatenation. The union operation by Definition \ref{def:union} operates only on states defined in its arguments and the resulting states are the union of the argument's states that are part of agents' capabilities and constraints. The subtraction operation by Definition \ref{def:subtraction}
does not remove any states from the subtrahend argument. The concatenation by Definition \ref{def:concatenation} operation creates the cross product state space of its argument. None of the operations introduces any state that does not exist in its arguments.

\item No valid events are removed and no new transitions are introduced.

Union and concatenation operations: The resulting event set is the union of the operator argument events. No new events that do not exist in the arguments are introduced. 

Subtraction operation: The resulting event set includes only the events that are in the subtrahend and not in the subtractor's event set without introducing any new event. In particular failure mode events are removed from individual agent's capabilities with subtraction operation without affecting any other events. The event set of global constraints are removed from global capabilities without affecting events that are not included in the global capabilities event set.

Any event that is in the individual or inter-agent capabilities and is not in the failure modes, individual constraints or inter-agent constraints event sets will be included in the environment model. Any event that is in the failure modes event set (but not in the inter-agent capabilities) as well as any event in the  individual or inter-agent constraint event set, will not appear in the environment model event set. No events that are not included in the individual and inter-agent capabilities will appear in the environment model event set. 
\end{enumerate}
Hence all and only transitions that are valid will appear in the environment model's event set.
\end{proof}

With the completeness property in place we can now state the following result about the optimality properties of the proposed system: 
\begin{proposition}
\label{prop:optimal}
For a system constructed based on individual and inter-agent capabilities, constraints and failure modes of the multi-agent system as presented in Section \ref{sec:PROB_FORM},  and assuming that only one agent is allowed to operate at any time instant, the solution of Algorithm \ref{alg:algo-modulechain} with the "Complete" solution type of Algorithm \ref{alg:funcomplete}, produces the optimal sequence of actions to achieve task plan $\gamma$. 
\end{proposition}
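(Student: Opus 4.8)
The plan is to reduce the optimality claim to the textbook correctness of Dijkstra's algorithm on the weighted digraph $\mathcal{H}_\mathcal{S}$, using Proposition \ref{prop:complete} to guarantee that $\mathcal{H}_\mathcal{S}$ faithfully encodes the admissible executions of the system. First I would fix the chosen initial state $x_{0,\mathcal{S}}$ and pass to the DFA $\mathcal{S}=\Delta({}^{\epsilon_0}\!\mathcal{S},x_{0,\mathcal{S}})$ via Definition \ref{def:deltaNFA2DFA}; this is well defined by Corollary \ref{cor:ResultNFA}, and by Proposition \ref{prop:complete} its transition structure contains all and only the state transitions dictated by the individual and inter-agent capabilities, constraints and failure modes. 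Because only one agent is allowed to act at each instant and the concatenation of Definition \ref{def:concatenation} takes a disjoint union of event sets, every $e\in E_\mathcal{S}$ labels a transition realising exactly one agent's state change; hence every admissible execution $x_{0,\mathcal{S}}\xrightarrow{e_1}x_1\xrightarrow{e_2}\cdots\xrightarrow{e_m}x_m$ is in one-to-one correspondence with a directed walk in $\mathcal{H}_\mathcal{S}$ from $x_{0,\mathcal{S}}$ to $x_m$, and conversely every such walk is an admissible execution.

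Next I would establish that the objective is additive and equals path weight. The chain $\mathcal{T}$ returned by Algorithm \ref{alg:algo-modulechain} from a path $\mathcal{P}$ is a sequence of single-port modules $\Tau_i=\{\mathcal{P}_i,E\{\mathcal{P}_i\}\{\mathcal{P}_{i+1}\},\mathcal{P}_{i+1}\}$ in the sense of Definition \ref{def:taskmodule}, with total cost $\sum_i c(\Tau_i)=\sum_i g_\mathcal{S}(E\{\mathcal{P}_i\}\{\mathcal{P}_{i+1}\})$, which is precisely the weight of $\mathcal{P}$ in $\mathcal{H}_\mathcal{S}$ since each edge stores its cost $g_\mathcal{S}(e)$. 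Thus minimising total transition cost over executions achieving $\gamma$ is the same as minimising path weight over walks from $x_{0,\mathcal{S}}$ to a state $x$ with $x\in X_{m,\mathcal{S}}$ and $proj(x,b)=\gamma$. Since the transition cost function takes values in $\mathbb{R}_{>0}$, deleting any cycle strictly decreases weight, so every minimal-weight walk is a simple path; this both bounds the search and certifies that Dijkstra's algorithm \cite{dijkstra1959note} returns a correct shortest path for each fixed target state.

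Then I would argue that Algorithm \ref{alg:funcomplete} performs an exhaustive, hence exact, minimisation over the finite set of admissible goal states: its double loop visits every $x_j\in X_\mathcal{S}$, invokes Dijkstra whenever $x_j\in X_{m,\mathcal{S}}\wedge proj(x_j,b)=\gamma$, and retains the least-cost candidate. Because $X_\mathcal{S}$ is finite the minimum is attained, and by the correspondence of the first paragraph it coincides with the optimal cost over all admissible executions reaching a $\gamma$-satisfying marked state; the associated path, re-expressed as the module chain $\mathcal{T}$, is therefore an optimal sequence of actions. The boundary case in which $x_{0,\mathcal{S}}$ already satisfies the specification is handled by the early return of Algorithm \ref{alg:algo-modulechain} with the empty chain of cost $0$, which is trivially optimal, while the absence of any path makes Dijkstra return $\emptyset$ and the algorithm reports infeasibility.

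I expect the main obstacle to be the execution-to-walk correspondence: one must show precisely that, under the single-active-agent assumption, no admissible joint behaviour is lost by identifying it with an edge-by-edge walk in $\mathcal{H}_\mathcal{S}$ (this is exactly where Proposition \ref{prop:complete} and the disjoint-union semantics of Definition \ref{def:concatenation} are used), and that passing to the DFA through $\Delta$ and restricting to $|\gamma|=1$ discards neither admissible targets $x_d$ nor admissible executions. Once that correspondence is pinned down, the remainder is a routine appeal to the optimality of Dijkstra's algorithm on a finite digraph with strictly positive edge weights.
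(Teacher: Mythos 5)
Your proposal is correct and follows essentially the same route as the paper's own proof: invoke Proposition \ref{prop:complete} so that all admissible executions (hence any optimal one) are encoded in the transition structure of $\mathcal{S}$, reduce the task-planning objective to a shortest directed path in the weighted graph $\mathcal{H}_\mathcal{S}$, and rely on the completeness and optimality of Dijkstra's algorithm together with the exhaustive scan over all marked states satisfying $\gamma$ in the Complete function. Your write-up merely makes explicit some steps the paper leaves implicit (the execution-to-walk correspondence, additivity of the module costs, and strictly positive edge weights), which strengthens rather than changes the argument.
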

\begin{proof}
Since according to Proposition \ref{prop:complete} the environment model is complete, then all (if any) optimal solutions are encoded in the transition system imposed by the transition function of the  environment model. The module composition problem is an integer optimization problem and in our case maps the task planning problem to the shortest directed path - a graph search problem. The implemented Dijkstra's solution is both complete and is guaranteed to find an optimal solution if such a solution exists.
\end{proof}

\section{CASE STUDIES}
\label{sec:Exp}
The proposed methodology could be applied in various applications, such as manufacturing logistics. To demonstrate the applicability of the proposed methodology, two case studies are presented related to a manufacturing logistics workflow, implemented on a computer with AMD Ryzen 5 4500U 2.3 GHz and 16GB RAM. In first case study, we consider a non-trivial logistic problem of appropriate size that permits an effective illustration with ${}^{\epsilon_0} \!\mathcal{S}$ that has $\prod_{i=1}^{n} |X_{A_i}| = 560$ states. The second case study pertains to a pattern of logistic workflows on manufacturing. This scenario is considered to face the challenges of productivity, efficiency and effectiveness of internal logistics management and agent automation in the workflows. In this case, ${}^{\epsilon_0} \!\mathcal{S}$ has $\prod_{i=1}^{n} |X_{A_i}| = 1.875 \times 10^6$ states. In both case studies, the module chains $\mathcal{T}$ provided either by the “Complete” or “Heuristic” options are identical and the edges of automata are associated with time costs in seconds.

\subsection{Case Study I}
We consider the team of $n=4$ agents with ${}^{\epsilon_0} \!\mathcal{A}=\{{}^{\epsilon_0} \!A_1, {}^{\epsilon_0} \!A_2, {}^{\epsilon_0} \!A_3, {}^{\epsilon_0} \!A_4 \}$, where ${}^{\epsilon_0} \!A_1$, ${}^{\epsilon_0} \!A_2$, ${}^{\epsilon_0} \!A_3$, ${}^{\epsilon_0} \!A_4$ denote the robots $R_1$ and $R_2$, the human $W_1$ and the item $I_1$, respectively. 
\begin{figure}[ht]
\centering
\includegraphics[scale=0.22]{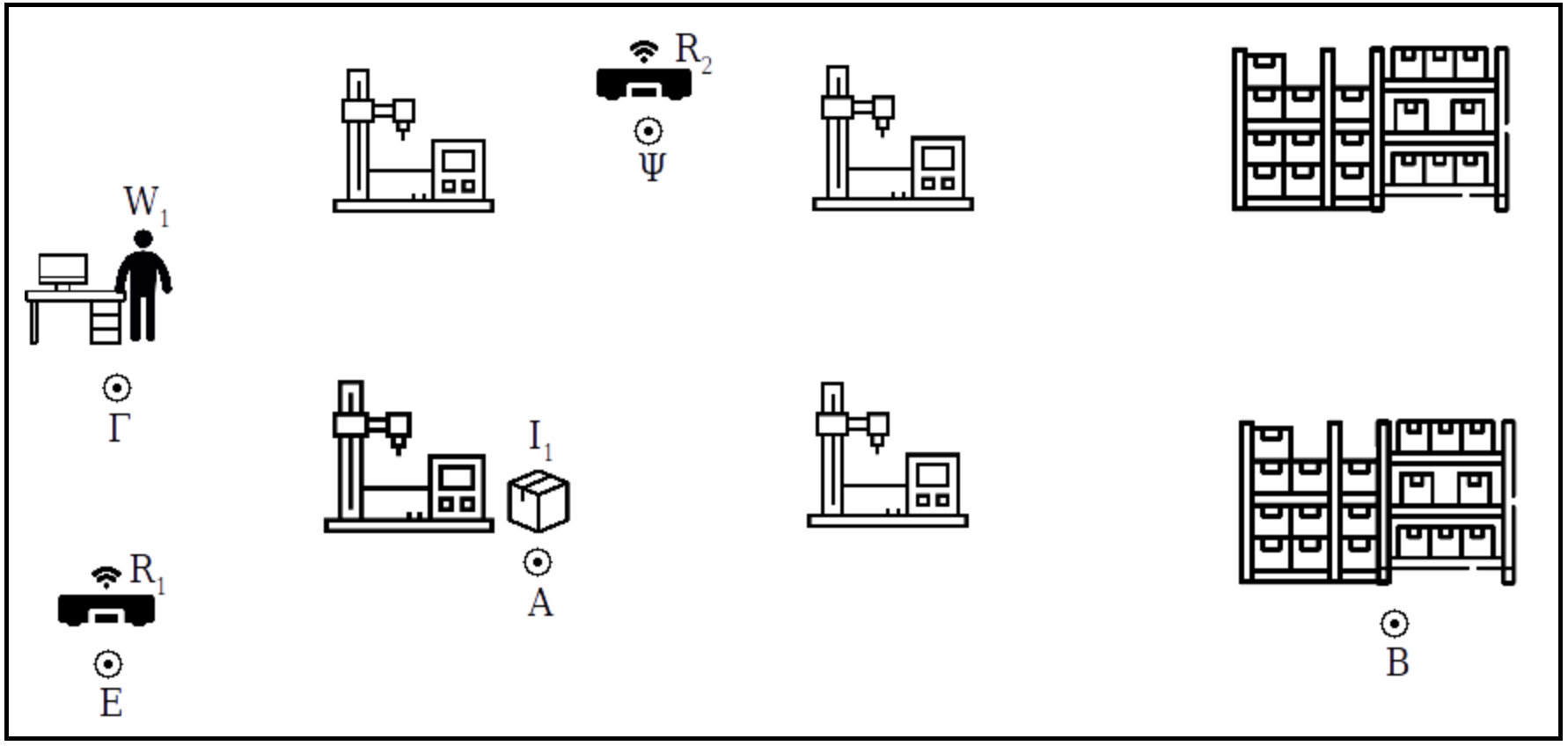}
\caption{Mock-up factory plant and agents.}
\label{fig:setup-example}
\end{figure}
Agents are on their initial state as shown in the plant of Fig. \ref{fig:setup-example}. $\text{E}$ and $\Psi$ denote the docking stations of robots $R_1$ and $R_2$, respectively. $W_1$ is located at work-cell $\Gamma$ and $I_1$ is at $\text{A}$. The objective task is to transport $I_1$ to $B$ in minimum time. Automata that are missing from the case study are considered as empty automata. Robot coordination and navigation is handled by appropriate controllers whose description is beyond the scope of the current work.

We consider that $X_{A_1} \cap X_{A_2} \cap X_{A_3} \cap X_{A_4} = \left\{A, B, \Gamma \right\}$. $R_1$ and $R_2$ denote the payload of ${}^{\epsilon_0} \!A_1$ and ${}^{\epsilon_0} \!A_2$ in $X_{A_4}$ (i.e. $I_1$ can be loaded to $R_1$ or $R_2$). We define the capabilities and the constraints of ${}^{\epsilon_0} \!A_1, {}^{\epsilon_0} \!A_2, {}^{\epsilon_0} \!A_3, {}^{\epsilon_0} \!A_4$ as illustrated in Fig.~\ref{fig:agents_caps} and~\ref{fig:agents_cons} using equations (\ref{eq:agents_cap}) and (\ref{eq:agents_constr}). Subtracting ${}^{\epsilon_0} \!D_1$ from ${}^{\epsilon_0} \!K_1$ and ${}^{\epsilon_0} \!D_2$ from ${}^{\epsilon_0} \!K_2$, we generate ${}^{\epsilon_0} \!A_1$ and ${}^{\epsilon_0} \!A_2$. Since ${}^{\epsilon_0} \!D_3, {}^{\epsilon_0} \!D_4 = \emptyset$, ${}^{\epsilon_0} \!A_3$ and ${}^{\epsilon_0} \!A_4$ are derived from ${}^{\epsilon_0} \!K_3$ and ${}^{\epsilon_0} \!K_4$ respectively. Using the concatenation of ${}^{\epsilon_0} \!K_1$, ${}^{\epsilon_0} \!K_2$, ${}^{\epsilon_0} \!K_3$ and ${}^{\epsilon_0} \!K_4$, we generate the ${}^{\epsilon_0} \!\mathcal{K}$ $\epsilon_0$-NFA. Using the concatenation of ${}^{\epsilon_0} \!D_1$ and ${}^{\epsilon_0} \!D_2$, we extract the ${}^{\epsilon_0} \mathcal{D}$ $\epsilon_0$-NFA. In Fig.\ref{fig:interagents_caps}, ${}^{\epsilon_0} \!\mathcal{K}_A$ states the loading and unloading actions and defines the inter-agent capabilities between $R_1$, $R_2$, $W_1$, $I_1$. Adding ${}^{\epsilon_0} \!\mathcal{K}_A$ to ${}^{\epsilon_0} \mathcal{K}$, we construct ${}^{\epsilon_0} \mathcal{\widetilde{K}}$. Considering that ${}^{\epsilon_0} \mathcal{D}_A = \emptyset$, then ${}^{\epsilon_0} \!\mathcal{\widetilde{D}} = {}^{\epsilon_0} \!\mathcal{D}$. Finally, the model of ${}^{\epsilon_0} \!\mathcal{S}$ is given by subtracting ${}^{\epsilon_0} \mathcal{\widetilde{D}}$ from ${}^{\epsilon_0} \!\mathcal{\widetilde{K}}$ using equation (\ref{eq:environment}), where ${}^{\epsilon_0} \!X_\mathcal{S} = \prod_{i=1}^{n} |X_{A_i}| = 560$ states. We state $\gamma$ as $proj(x_d, b)=\text{B}$, where $b=0001$. The task planning problem here is to find the optimal $\mathcal{T}$ that takes $proj(x_{0,\mathcal{S}},b)=\text{A}$ to $\gamma$ utilizing the robot agent that minimizes the transition cost for the task execution subject to given environment capabilities and constraints. 
\begin{figure}[ht]
\centering
\includegraphics[scale=0.25]{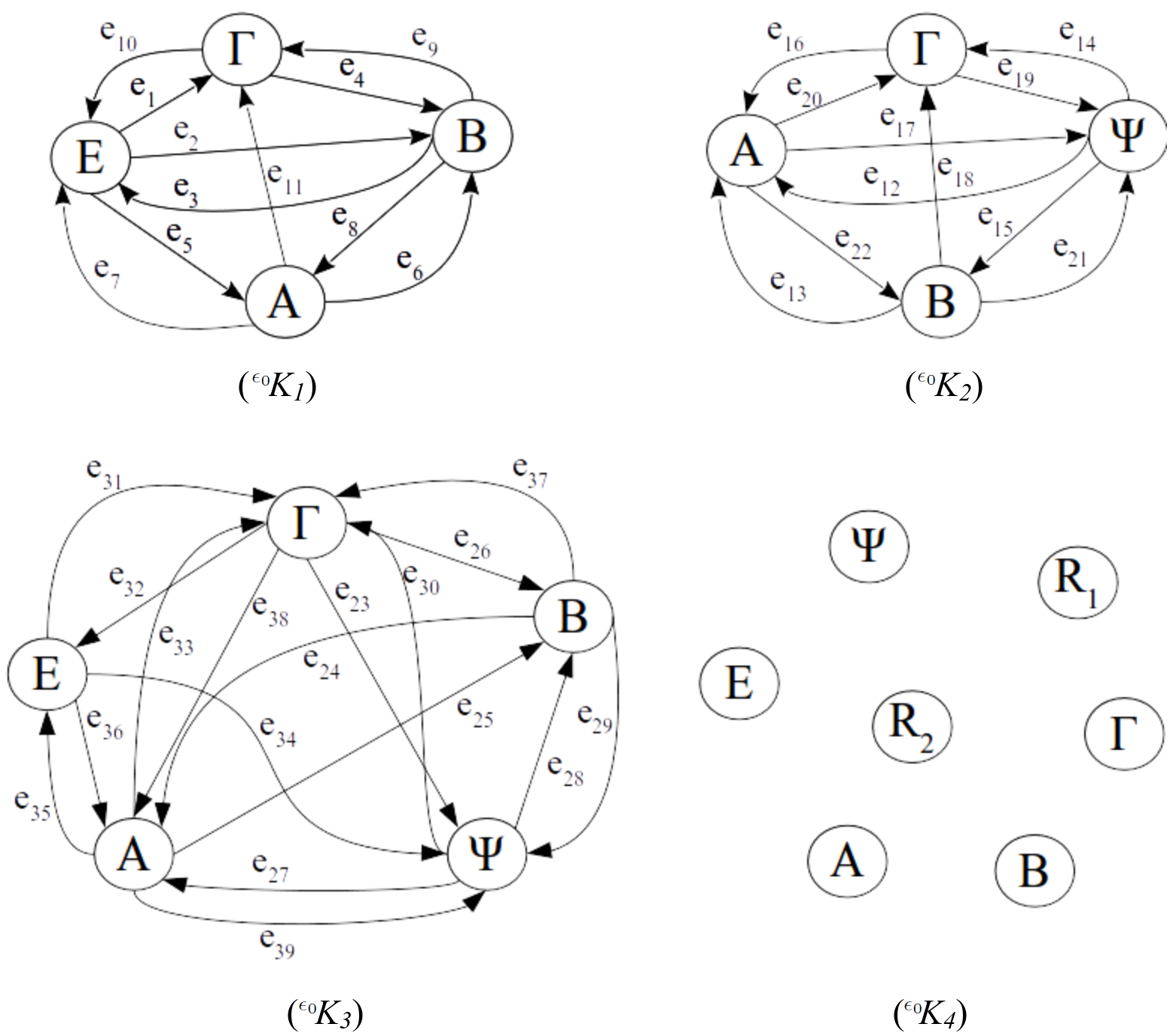}
\caption{State transition graphs of agents' capabilities.}
\label{fig:agents_caps}
\end{figure}

\begin{figure}[ht]
\centering
\includegraphics[scale=0.25]{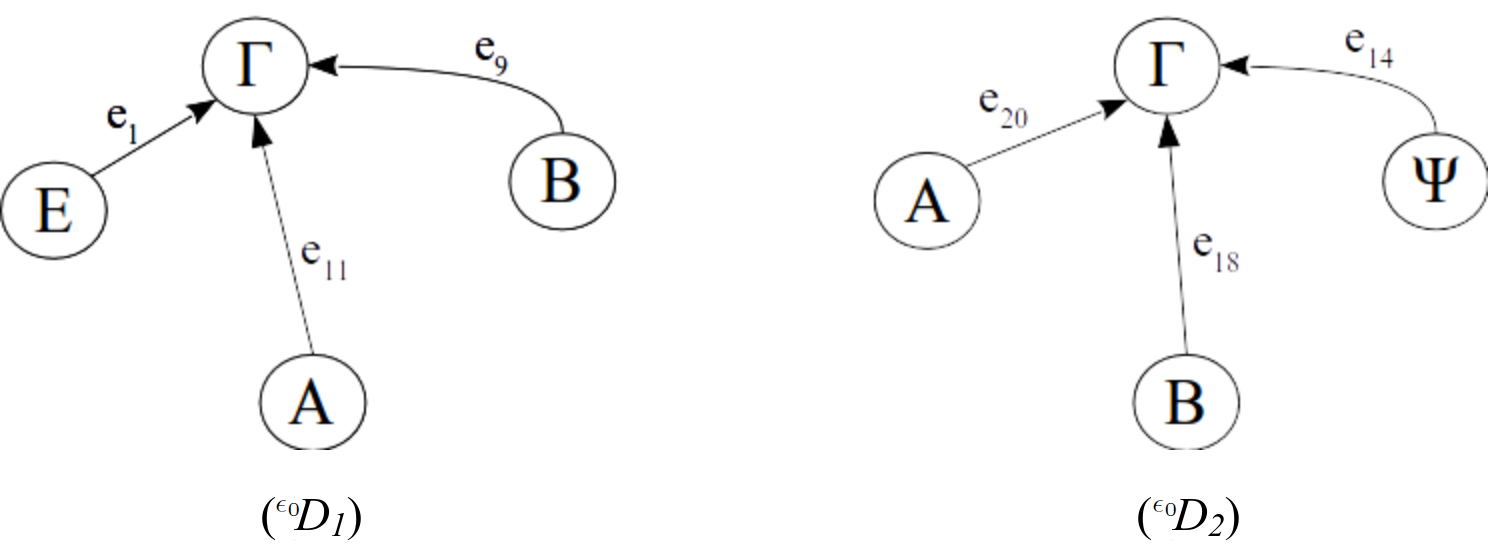}
\caption{State transition graphs of agents' constraints.}
\label{fig:agents_cons}
\end{figure}

\begin{figure}[ht]
\centering
\includegraphics[scale=0.25]{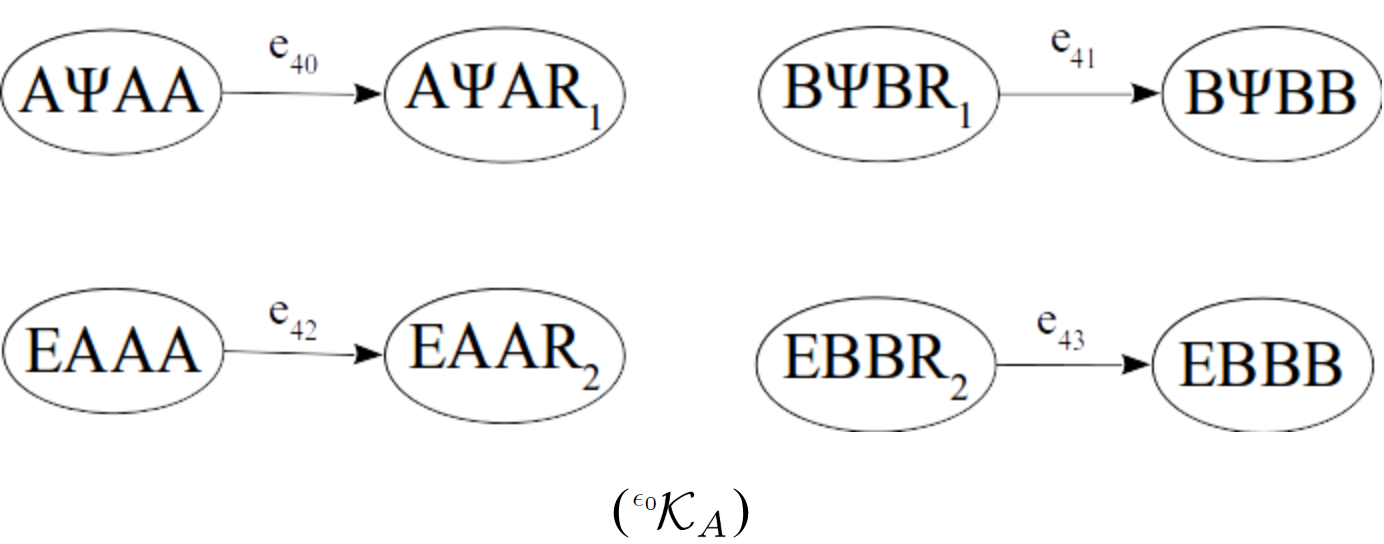}
\caption{State transition graphs of inter-agents capabilities.}
\label{fig:interagents_caps}
\end{figure}

The optimal $\mathcal{T}$ computed by SPECTER with the heuristic option after the failure mode incorporation resulting to a composition of $6$ open chain modules and it is presented in Fig. \ref{fig:module_chain1}. The solution utilized the heuristic option. In words, ($\Tau_1$) $R_1$ navigates from $\text{E}$ to the pick-up location $\text{A}$ in $10s$, ($\Tau_2$) $W_1$ moves from $\Gamma$ to $\text{A}$ to load the payload on $R_1$, ($\Tau_3$) $W_1$ loads $I_1$ on $R_1$ in $3s$, ($\Tau_4$), $A_1$ moves from $\text{A}$ to $\text{B}$ carrying the payload in $15s$, ($\Tau_5$), $W_1$  moves to $\text{B}$, and ($\Tau_6$) $W_1$ unloads $I_1$ at $\text{B}$. The task is completed in $55s$. Even though $R_2$ is closer to $I_1$, it is not utilized due to its failure mode (Fig. \ref{fig:agent_failure}). 

As defined in section \ref{sec:MC_PF}, $q(\Tau_0^{-1}) = x_{0,\mathcal{S}} = \text{E}\Psi \Gamma \text{A}$ and $p(\Tau_0^{-1}) = x_d = \text{E}\Psi \Gamma \text{B}$ (line 2, Algorithm \ref{algo:funheuristic}). From the solution $\mathcal{P}$ provided in step 4 of Algorithm \ref{algo:funheuristic} the first time that the expression in step 10 holds is for $k=7$ where Algorithm \ref{algo:funheuristic} returns the solution to Algorithm \ref{alg:algo-modulechain} and the module chain is constructed. 

The pre-processing runtime to construct agents and environment $\epsilon_0$-NFAs is $3.816 \times 10^{-2}$. The problem solving runtime to find the optimal $\mathcal{T}$ with the heuristic option is $1.245 \times 10^{-3}$ seconds whereas $5.952 \times 10^{-3}$ seconds running the complete option. SPECTER yields the same solution for the optimal $\mathcal{T}$ with the complete option for all the $80$ possible $x_d$'s for which we have that $proj(x_d, b)=\text{B}$. Both module chains $\mathcal{T}$ provided either by the ``Complete'' or ``Heuristic'' options are identical in this scenario case. 
\begin{figure}[ht]
\centering
\includegraphics[scale=0.25]{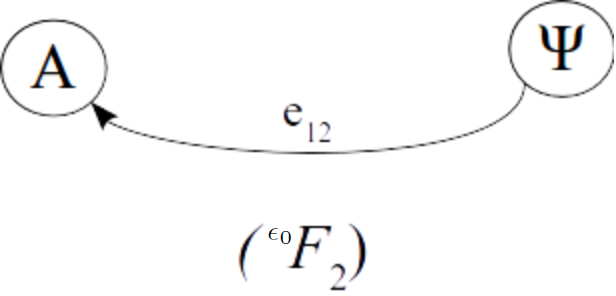}
\caption{State transition graphs of failure mode.}
\label{fig:agent_failure}
\end{figure}
\begin{figure}[ht]
\centering
\includegraphics[scale=0.175]{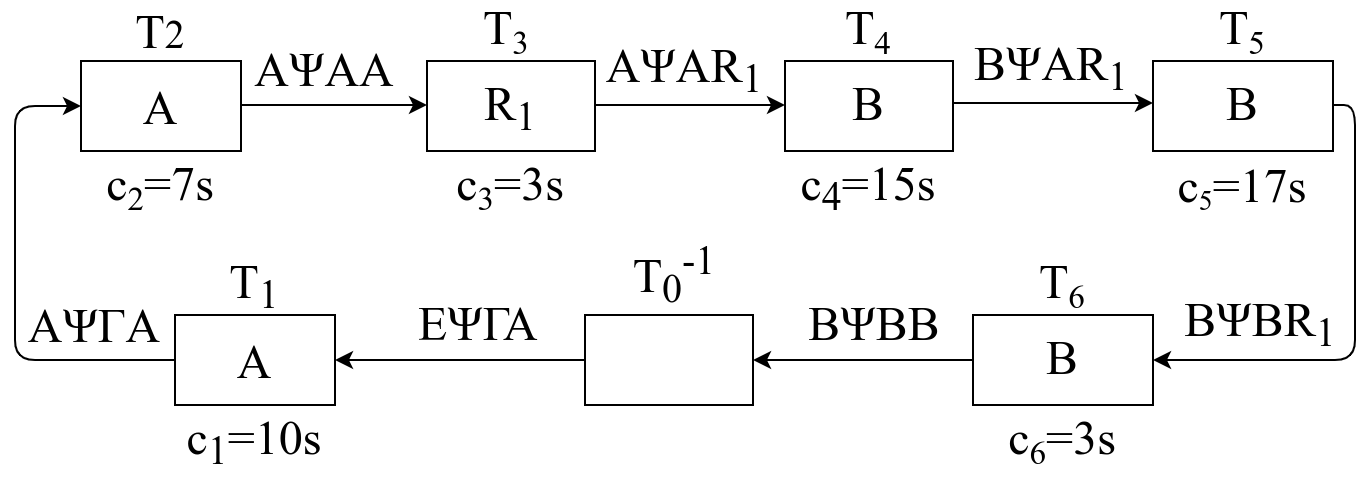}
\caption{Task plan as the optimal $\mathcal{T}$ with the ``Heuristic'' option.}
\label{fig:module_chain1}
\end{figure}

\subsection{Case Study II}
In this case study, we identified a pattern of two regular workflows concerning the manufacturing of semi-finished products 1 and 2 and final products 1. The construction of semi-finished products 1 require raw materials 1 and 2 whereas the semi-finished products 2 require raw material 3. The semi-finished products 1 are produced by the injection machine 1. The semi-finished products 2 are created by injection machine 2. Entities produced by injection machines are collected at stations $J$ or $C$. The semi-finished products 1 are passed through the conveyor 2 to the packing machine. The final products 1 are created from semi-finished product 1, which are passed through the conveyor 2 to the packing machine. Packed entities are collected at station $F$ and then are transported to Warehouse.

To model the two workflows requires to define the agents involved in the workflow processes, such as material, products, robots and workers. We consider that $n= 9$ with ${}^{\epsilon_0} \!A_1, {}^{\epsilon_0} \!A_2, {}^{\epsilon_0} \!A_3$ define the raw materials 1, 2 and 3, ${}^{\epsilon_0} \!A_4, {}^{\epsilon_0} \!A_5$ denote the semi-finished products 1 and 2, ${}^{\epsilon_0} \!A_6$ denotes the final product 1, ${}^{\epsilon_0} \!A_7$, ${}^{\epsilon_0} \!A_8$ model robots 1 and 2 and ${}^{\epsilon_0} \!A_9$ models the human-worker, where $\forall i \in \{1,...n\}: {}^{\epsilon_0} \!A_i \in {}^{\epsilon_0} \!\mathcal{A}$. The initial states of the agents are: Raw materials 1, 2, 3 are stored, each robot is at its docking station and the human is in Warehouse. The objective tasks are: (a) produce ${}^{\epsilon_0} \!A_6$ and store it in Warehouse and (b) produce ${}^{\epsilon_0} \!A_4$ and ${}^{\epsilon_0} \!A_5$, both in minimum time. Robot coordination and navigation is handled by appropriate controllers whose description is beyond the scope of the current work.

\begin{figure}[ht]
\centering
\includegraphics[scale=0.083]{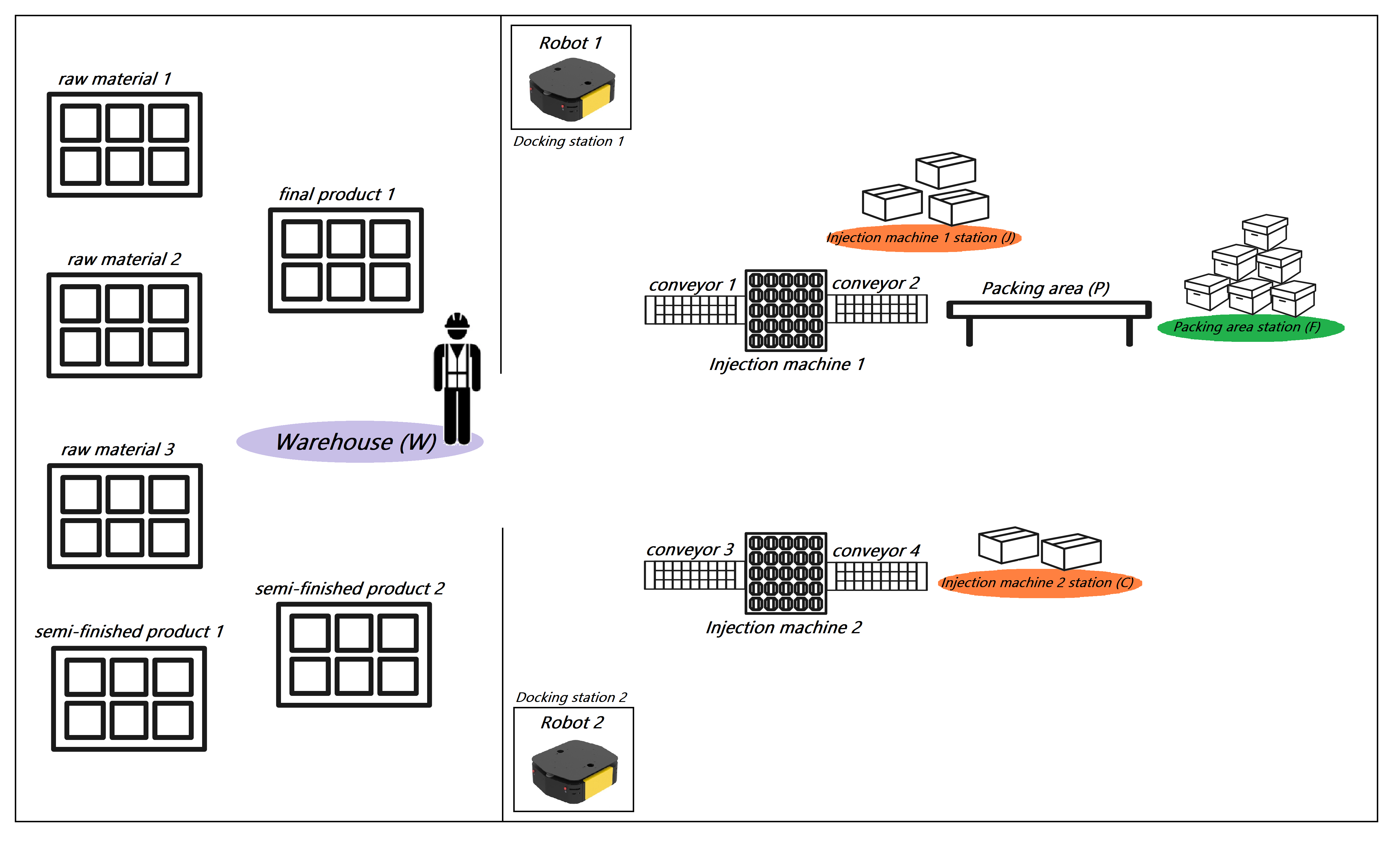}
\caption{Factory plant.}
\label{fig:case2}
\end{figure}

Models of ${}^{\epsilon_0} \!A_7$, ${}^{\epsilon_0} \!A_8$ and ${}^{\epsilon_0} \!A_9$ are constructed by combing the agents capabilities and constraints. Capabilities and constraints of ${}^{\epsilon_0} \!A_1, {}^{\epsilon_0} \!A_2, {}^{\epsilon_0} \!A_3, {}^{\epsilon_0} \!A_4, {}^{\epsilon_0} \!A_5, {}^{\epsilon_0} \!A_6$ are defined as in inter-agent capabilities and constraints. ${}^{\epsilon_0} \!\mathcal{S}$ is constructed, where $\footnotesize{X_\mathcal{S} = \prod_{i=1}^{n} |X_{A_i}| = 1.875 \times 10^6}$ states. We state $\gamma$ as $proj(x_d, b)=JCW$, where $b=000111000$. 

The optimal $\mathcal{T}$ computed by SPECTER with the heuristic option resulting to a composition of $24$ open chain modules. In words, ($\Tau_1$) raw materials 1, 2, 3 are stored in $W$, robots are at their docking stations, worker is at $W$, ($\Tau_2$) robot 1 goes to $W$, ($\Tau_3$) worker loads the raw material 3 to robot 1, ($\Tau_4$) worker loads raw material 2 to robot 1, ($\Tau_5$) worker loads raw material 1 to robot 1, ($\Tau_6$) worker goes to injection 2, ($\Tau_7$) robot 1 goes to injection 2 while carrying raw materials, ($\Tau_8$) worker unloads raw material 3 from the robot 1 and loads the raw material 3 to the $C$, ($\Tau_9$) robot 1 goes to injection 2 while carrying raw materials 1 and 2, ($\Tau_{10}$) worker goes to injection 2, ($\Tau_{11}$) $C$ starts the production of semi-finished product 2, ($\Tau_{12}$) worker unloads raw material 2 from robots 1 and loads the raw material 2 to $J$, ($\Tau_{13}$) injection machine starts the preparation of semi-finished product 1, ($\Tau_{14}$) worker unloads raw material 1 from robot 1 and loads the raw material 1 to $J$, ($\Tau_{15}$) injection 1 starts the production of semi-finished product 1, ($\Tau_{16}$) the final product 1 is produced at $F$a, ($\Tau_{17}$) semi-product 2 is produced by $C$, ($\Tau_{18}$) worker goes to $F$a, ($\Tau_{19}$) robot 2 goes to $F$a, ($\Tau_{20}$) worker loads final product 1 on robot 2, ($\Tau_{21}$) robot 2 goes to $W$, ($\Tau_{22}$) semi-finished product 1 is produced by $J$, ($\Tau_{23}$) worker goes to $W$, ($\Tau_{24}$) worker unloads the final product 1 from robot 2 to $W$.

The pre-processing runtime to construct the agents and environment models is $3.081 \times 10^5$. The problem solving runtime to find the optimal $\mathcal{T}$ utilizing the heuristic option is $165.73$ seconds whereas $1.072 \times 10^4$ seconds required to implement the complete option. SPECTER yields the same optimal $\mathcal{T}$ using the heuristic option for all $15000$ possible $x_d$'s for which we have that $proj(x_d, b)=JCW$. $\mathcal{T}$ provided either by Algorithm \ref{alg:funcomplete} or \ref{algo:funheuristic} are identical.

\section{CONCLUSIONS}
\label{sec:Concl}
A new multi-agent task planner approach, the SuPErvisory Control Task plannER, \textit{SPECTER} has been proposed. Given the capabilities, constraints and failure modes of the agents under the framework of NFAs with $\epsilon$-transitions, SPECTER produces optimized solutions, providing the sequence of tasks for transporting the state of the environment from any initial to any given destination state. The developed algorithms can provide a complete solution with optimality guarantees whenever a solution exists. By relaxing the completeness property requirement, an option providing a significant reduction in the computational requirements is proposed, that  provides suboptimal solutions through the use of efficient  heuristics. The results of the case studies demonstrate the applicability as well as the effectiveness and validity of the proposed methodology in successfully generating optimal executions for multi-agent systems with a predetermined set of individual capabilities and constraints as well as agent coupling capabilities and restrictions. Future work will focus on combining SPECTER with supervisory control theory to enable reactive execution in dynamic environments.

\bibliographystyle{IEEEtran}
\bibliography{IEEEabrv,library2021}
\end{document}